\DeclareRobustCommand{\mybox}[2][gray!10]{%
	\begin{tcolorbox}[   %
		left=0pt,
		right=0pt,
		top=0pt,
		bottom=0pt,
		colback=#1,
		colframe=#1,
		width=\dimexpr\textwidth\relax, 
		enlarge left by=0mm,
		boxsep=5pt,
		arc=0pt,outer arc=0pt,
		]
		#2
	\end{tcolorbox}
}
\newsavebox\CBox
\newcommand\hcancel[2][0.5pt]{%
  \ifmmode\sbox\CBox{$#2$}\else\sbox\CBox{#2}\fi%
  \makebox[0pt][l]{\usebox\CBox}%
  \rule[0.5\ht\CBox-#1/2]{\wd\CBox}{#1}}
\def\eqref#1{equation~\ref{#1}}
\def\1{\bm{1}}
\DeclareMathAlphabet{\mathsfit}{\encodingdefault}{\sfdefault}{m}{sl}
\SetMathAlphabet{\mathsfit}{bold}{\encodingdefault}{\sfdefault}{bx}{n}
\newcommand{\E}{\mathbb{E}}
\newcommand{\EE}{\mathcal{E}}
\newcommand{\DD}{\mathcal{D}}
\newcommand{\appropto}{\mathrel{\vcenter{
  \offinterlineskip\halign{\hfil$##$\cr
    \propto\cr\noalign{\kern2pt}\sim\cr\noalign{\kern-2pt}}}}}
\newcommand{\Ls}{\mathcal{L}}
\newcommand{\R}{\mathbb{R}}
\DeclareMathOperator*{\argmin}{arg\,min}
\newtheoremstyle{mystyle}%
  {\topsep}%
  {\topsep}%
  {\normalfont}%
  {}%
  {\itshape}%
  {.}%
  {.5em}%
  {}%
\theoremstyle{mystyle}
\newtheorem{proposition}{Proposition}
\newtheorem{definition}{Definition}
\theoremstyle{remark}
\newtheorem{remark}{Remark}
\title{Learning explanations that are hard to vary}
\author{Giambattista Parascandolo\textsuperscript{1, 2, *} \ \ \ \ Alexander Neitz\textsuperscript{1, *} \\ \textbf{Antonio Orvieto\textsuperscript{2} } \ \ \ \ \textbf{Luigi Gresele\textsuperscript{1, 3}  \ \ \ \ Bernhard Schölkopf\textsuperscript{1, 2}} \\
	{\small \textsuperscript{1}MPI for Intelligent Systems, T\"ubingen, \ \ \ \textsuperscript{2}ETH, Z\"urich, \ \ \  \textsuperscript{3}MPI for Biological Cybernetics, Tübingen}\\
	 {\small $^*$equal contribution}
}
\begin{document}
	
	\maketitle
	\vspace{-5mm}
	\begin{abstract}
	\vspace{-1mm}
		In this paper, we investigate the principle that \textit{good explanations are hard to vary} in the context of deep learning.
		We show that averaging gradients across examples -- akin to a logical OR ($\lor$) of patterns -- can favor memorization and `patchwork' solutions that sew together different strategies, instead of identifying invariances.
		To inspect this, we first formalize a notion of consistency for minima of the loss surface, which measures to what extent a minimum appears only when examples are pooled. 
		We then propose and experimentally validate a simple alternative algorithm based on a logical AND ($\land$), that focuses on invariances and prevents memorization in a set of real-world tasks. 
		Finally, using a synthetic dataset with a clear distinction between invariant and spurious mechanisms, we dissect learning signals and compare this approach to well-established regularizers.
		
	\end{abstract}

\vspace{-3mm}
\section{Introduction}
\vspace{-1mm}
\begin{wrapfigure}{r}{0.42\textwidth}
	\vspace{-18pt}
	\centering
	\includegraphics[width=\linewidth]{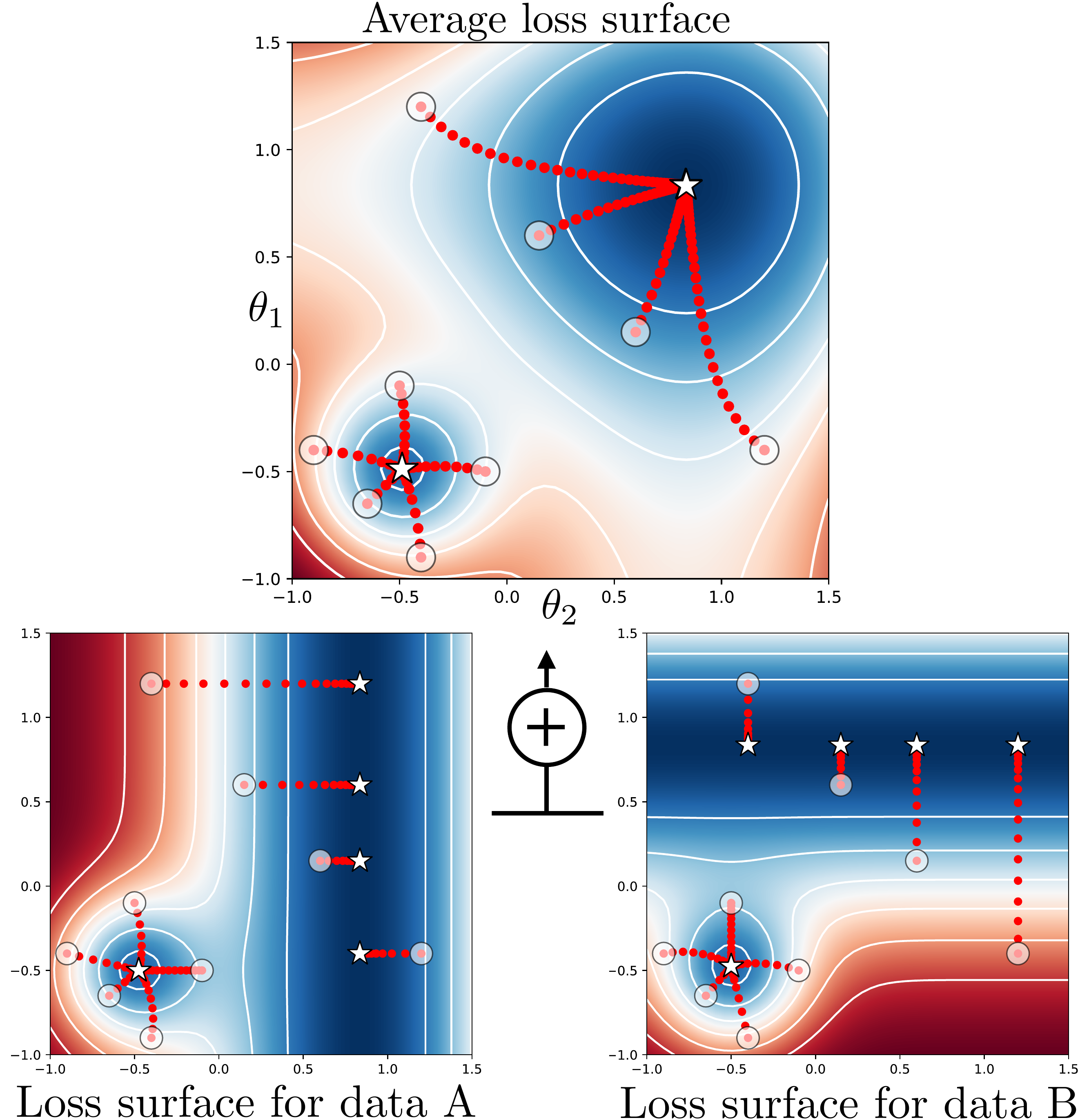}
	\caption{Loss landscapes of a two-parameter model. Averaging gradients forgoes information that can identify patterns shared across different environments.}
	\label{fig:ave_prod}
	\vspace{-5mm}
\end{wrapfigure}
Consider the top of Figure~\ref{fig:ave_prod}, which shows a view from above of the loss surface obtained as we vary a two dimensional parameter vector $\mathbf{\theta} = (\theta_1, \theta_2)$, for a fictional dataset containing two observations $x_A$ and $x_B$.
Note the two global minima at coordinates $(\theta_1, \theta_2) = (-0.5, -0.5)$ and $(0.8, 0.8)$.
Depending on the initial values of $\theta$ --- marked as white circles --- gradient descent converges to one of the two minima.
Judging solely by the value of the loss function, which is zero in both cases, the two minima look equally good.

However, looking at the loss surfaces for $x_A$ and $x_B$ separately, as shown below, a crucial difference between those two minima appears:
Starting from the same initial parameter configurations and following the gradient of the loss, $\nabla_\theta \mathcal{L}(\theta,x_i)$, the probability of finding the same $(0.8, 0.8)$-minimum in either case is zero.
In contrast, the minimum in the lower-left corner has a significant overlap across the two loss surfaces, so gradient descent can converge to it even if training on $x_A$ (or $x_B$) only.
Note that after averaging there is no way to tell what the two loss surfaces looked like: \textit{Are we destroying information that is potentially important?}

In this paper, we argue that the answer is yes.
In particular, we hypothesize that if the goal is to find \textit{invariant mechanisms} in the data, these can be identified by finding explanations (e.g. model parameters) that are hard to vary across examples. A notion of invariance implies something that stays the same, as something else changes.
We assume that data comes from different \textit{environments}: An invariant mechanism is shared across all, generalizes out of distribution (o.o.d.), but might be hard to model; each environment also has spurious explanations that are easy to spot (`shortcuts'), but do not generalize o.o.d.

We formalize a notion of \textit{consistency}, which characterizes to what extent a minimum of the loss surface appears \textit{only} when data from different environments are pooled.
Minima with low consistency are `patchwork' solutions, which (we hypothesize) sew together different strategies and should not be expected to generalize to new environments.
An intuitive description of this principle was proposed by physicist David Deutsch: \textit{``good explanations are hard to vary''} \citep{deutsch2011beginning}.

Using the notion of consistency, we define \textit{Invariant Learning Consistency} (ILC), a measure of the expected consistency of the solution found by a learning algorithm on a given hypothesis class.
The ILC can be improved by changing the hypothesis class or the learning algorithm, and in the last part of the paper we focus on the latter.
We then analyse why current practices in deep learning provide little incentive for networks to learn invariances, and show that standard training is instead set up with the explicit objective of greedily maximizing speed of learning, i.e., progress on the training loss.
When learning ``as fast as possible'' is not the main objective, we show we can trade-off some ``learning speed'' for prioritizing learning the invariances.
A practical instantiation of ILC leads to o.o.d.\ generalization on a challenging synthetic task where several established regularizers fail to generalize;
moreover, following the memorization task from \cite{zhang2016understanding}, ILC prevents convergence on CIFAR-10 with random labels, as no shared mechanism is present, and similarly when a portion of training labels is incorrect.
Lastly, we set up a behavioural cloning task based on the game CoinRun \citep{cobbe2019quantifying}, and observe better generalization on new unseen levels.

\mybox[gray!8]{
	\paragraph{An example.}
	\begin{wrapfigure}{r}{0.40\textwidth}
		\vspace{-17pt}
		\centering
		\includegraphics[width=\linewidth]{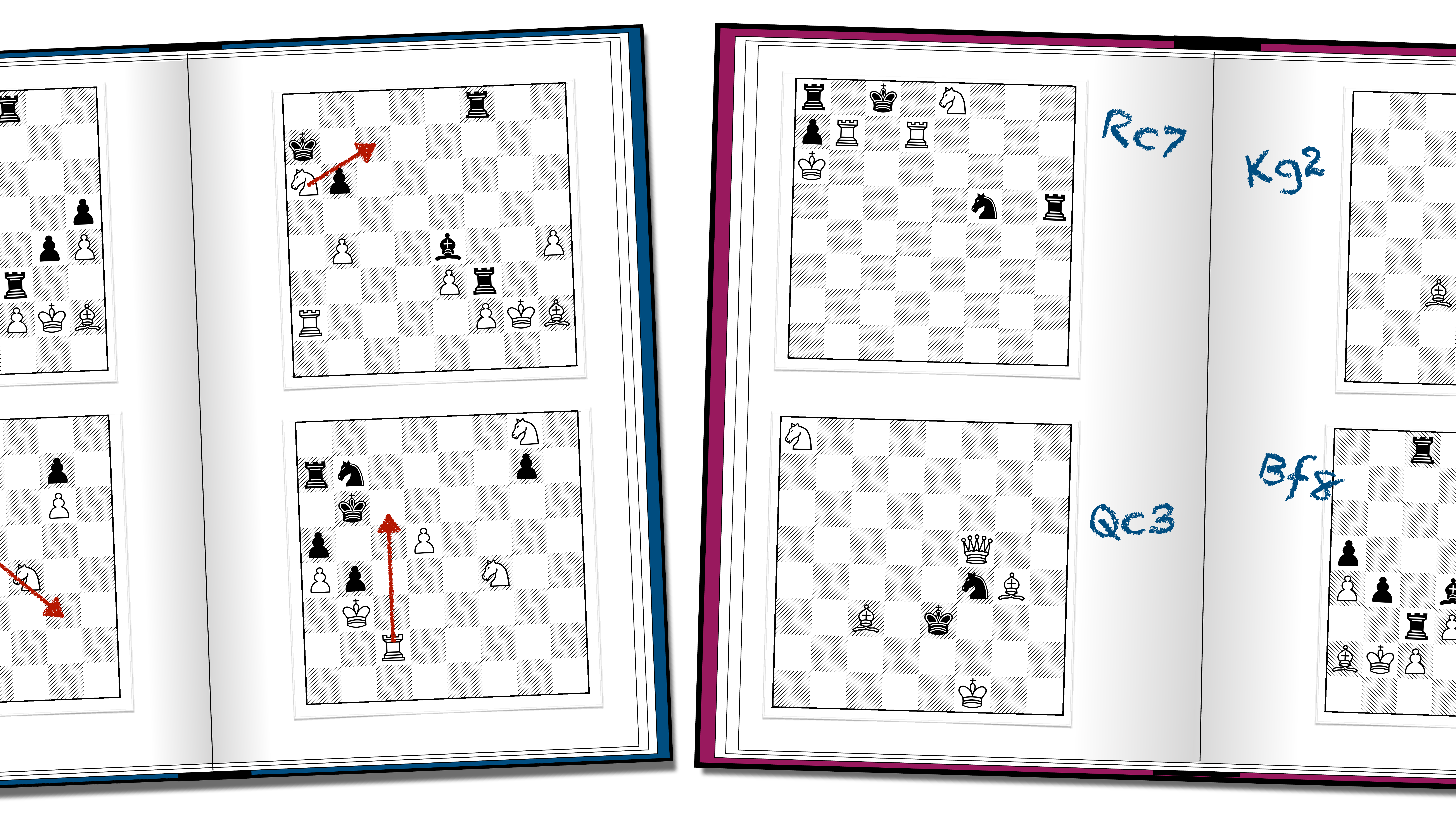}
		\label{fig:chess}
		\vspace{-10mm}
	\end{wrapfigure}
	Take these two second-hand books of chess puzzles. We can learn the two independent shortcuts (red arrows for the left book \textit{OR} hand-written solutions on the right), or actually learn to play chess (the invariant mechanism).
	While both strategies solve other problems from the same books (i.i.d.), only the latter generalises to new chess puzzle books (o.o.d.). How to distinguish the two?
	We would not have learned about the red arrows had we trained on the book on the right, and vice versa with the hand-written notes.
}

\vspace{-2mm}
\section{Explanations that are hard to vary}
\vspace{-2mm}
\label{sec:theory}
We consider datasets $\{\DD^e\}_{e\in\EE}$, with $|\EE|=d$, and $\DD^e = (x^e_i, y^e_i)$, $i_{e} = 1, \ldots, n^{e}$. Here $x^e_i \in \mathcal X\subseteq \mathbb{R}^m$ is the vector containing the observed inputs, and $y^e_i \in \mathcal{Y}\subseteq \mathbb{R}^p$ the targets. The superscript ${e} \in \mathcal{E}$ indexes some aspect of the data collection process, and can be interpreted as an environment label.
Our objective is to infer a function $f:\mathcal{X} \rightarrow \mathcal{Y}$ --- which we call \textit{mechanism} --- assigning a target $y^e_i$ to each input $x^e_i$; as explained in the introduction, we assume that such function is shared across all environments. 
For estimation purposes, $f$ may be parametrized by a neural network with continuous activations; %
for weights $\theta\in\Theta\subseteq\R^n$, we denote the neural network output at $x\in\mathcal{X}$ as $f_{\theta}(x)$.
\vspace{-3mm}
\paragraph{Gradient-based optimization.} To find an appropriate model $f_\theta$, standard optimizers rely on gradients from a \textit{pooled} loss function $\Ls:\R^n\to\R$. This function measures the \textit{average} performance of the neural network when predicting data labels, across all environments: $\Ls(\theta):=\frac{1}{|\EE|}\sum_{e\in\EE}\Ls_e(\theta),$ with $\Ls_e(\theta):= \frac{1}{|\mathcal{D}^e|}\sum_{(x^e_i,y^e_i)\in\mathcal{D}^e} \ell(f(x^e_i;\theta),y^e_i);$ where $\ell:\R^p\times\R^p\to[0,+\infty)$ is usually chosen to be the $L2$ loss or the cross-entropy loss. The parameter updates according to gradient descent~(GD) are given by $\theta_{\text{GD}}^{k+1} = \theta_{\text{GD}}^{k}-\eta \nabla \Ls(\theta_{\text{GD}}^{k})$, where $\eta>0$ is the learning rate. Under some standard assumptions~\citep{lee2016gradient}, $(\theta^k_{\text{GD}})_{k\ge0}$ converges to a local minimizer of $\Ls$, with probability one.%
\vspace{-2.5mm}
\paragraph{When do we \emph{not} learn invariances?}
We start by describing what might prevent learning invariances in standard gradient-based optimization.  %

\textit{(i) Training stops once the loss is low enough.}
    If optimization learned spurious patterns by the time it converged, invariances will not be learned anymore. %
    This depends on the rate at which different patterns are learned.
    The rates at which invariant patterns emerge (and vice-versa, the spurious patterns do not) can be improved by e.g.: (a) careful architecture design, e.g.\ as done by hardcoding spatial equivariance in convolutional networks; (b) fine-tuning models pre-trained on large amounts of data, where strong features already emerged and can be readily selected.
    
\textit{(ii) Learning signals: everything looks relevant for a dataset of size 1.}
    Due to the summation in the definition of the pooled loss $\Ls$, gradients for each example are computed independently.
    Informally, each signal is identical to the one for an equivalent dataset of size 1, where every pattern appears relevant to the task.
    To find invariant patterns across examples, if we compute our training signals on each of them independently, we have to rely on the way these are aggregated.\footnote{After computing the gradients for a dataset of $n - 1$ examples, if an $n$-th example appeared, we would just compute one more vector of gradients and add it to the sum.
    A Gaussian Process~\citep{rasmussen2003gaussian} for example would require recomputing the entire solution from scratch, as all interactions are considered.}

\textit{(iii) Aggregating gradients: averaging maximizes learning speed.}
    The default method to pool gradients is the \emph{arithmetic mean}.
GD applied to $\Ls$ is designed to minimize the pooled loss \emph{by prioritizing descent speed}.%
\footnote{The same reasoning holds for SGD in the finite-sum optimization case $\Ls = \frac{1}{m}\sum_{i=1}^m\Ls_i$, where gradients from a mini-batch are seen as unbiased estimators of gradients from the pooled loss. \citep{bottou2018optimization}.} %
Indeed, a step of GD is equivalent to finding a tight\footnote{Assume that $\Ls$ has $L$-Lipschitz gradients~(i.e. curvature bounded from above by $L$). 
Then, at any point $\tilde\theta$, we can construct the upper bound $\hat\Ls_{\tilde\theta}(\theta) = \Ls(\tilde\theta) +\nabla \Ls(\tilde\theta)^\top(\theta-\tilde\theta)+L\|\theta-\tilde\theta\|^2/2$.} quadratic upper bound $\hat\Ls$ to $\Ls$, and then jumping to the minimizer of this approximation \citep{nocedal2006numerical}. 
While speed is often desirable, by construction GD ignores one potentially crucial piece of information: The gradient $\nabla\Ls$ is the result of averaging signals $\nabla\Ls_e$, which correspond to the patterns visible from each environment at this stage of optimization.
In other words, GD with average gradients greedily maximizes for learning speed, but in some situations we would like to trade some convergence speed for invariance.
\begin{wrapfigure}{r}{0.24\textwidth}
	\vspace{-4.1mm}
	\centering
	\includegraphics[width=0.94\linewidth]{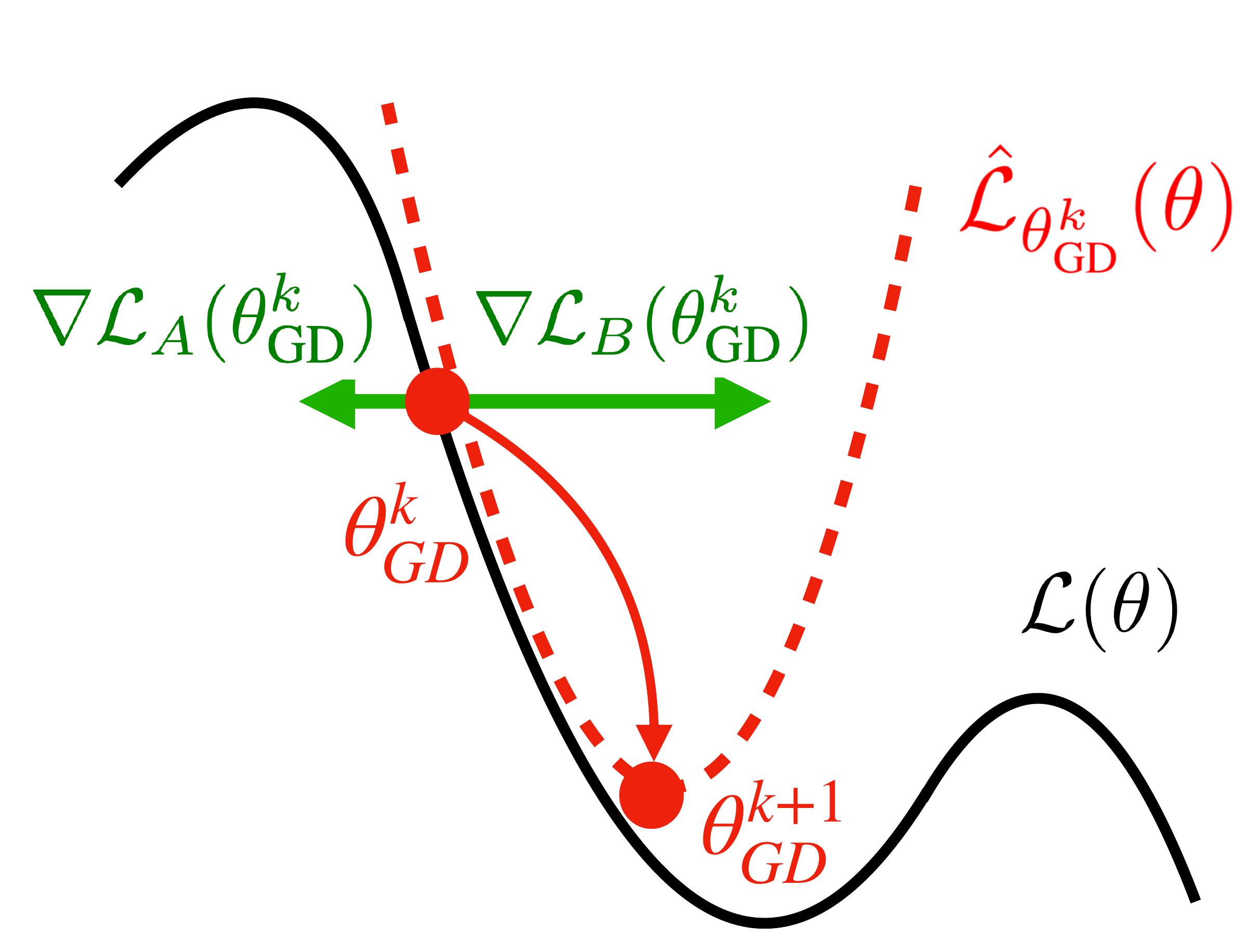}
		\vspace{-1.mm}
		\caption{\small Inconsistency in gradient directions.}
	\vspace{-3.4mm}
\end{wrapfigure}
For instance, instead of performing an arithmetic mean between gradients~(logical OR), we might want to look towards a logical AND, which can be characterized as a \textit{geometric} mean. %
Fig.~\ref{fig:ave_prod} shows how a sum can be seen as a logical OR: the two orthogonal gradients from data $A$ and data $B$ at (0.5,0.5) point to different directions, yet both are kept in the combined gradient.\footnote{Loosely speaking, a sum is large if any of the summands is large, a product is large if all factors are large.}
In Sec.~\ref{sec:logical_and} we elaborate on this idea and on implementing a logical AND between gradients.
Before presenting this discussion, we take some time to better motivate the need for invariant learning consistency and to construct a precise mathematical definition of consistency.

\subsection{Formal definition of ILC}
\label{sec:formal}
Let $\Theta^*_\mathcal{A}$ be the set of convergence points of algorithm $\mathcal{A}$ when trained using all environments (pooled data): that is, $\Theta_{\mathcal{A}}^* = \{\theta^*\in\Theta \ | \ \exists \  \theta^0\in \R^n \text{ s.t. } \mathcal{A}_{\infty}(\theta^0,\mathcal{E}) = \theta^*\}$. For instance, if $\mathcal{A}$ is gradient descent, the result of~\cite{lee2016gradient} implies that $\Theta^*_\mathcal{A}$ is the set of local minimizers of the pooled loss $\Ls$. To each $\theta^*\in\Theta^*_\mathcal{A}$, we want to associate a consistency score, quantifying the concept ``\textit{good $\theta^*$ are hard to vary}''.
In other words, we would like the score to capture the consistency of the loss landscape around $\theta^*$ across the different environments.
For example, in Fig.~\ref{fig:ave_prod} the loss landscape near the bottom-left minimizer is consistent across environments, while the top-right minimizer is not.
\begin{wrapfigure}{r}{0.4\textwidth}
\vspace{-11pt}
\centering
\includegraphics[width=.96\linewidth]{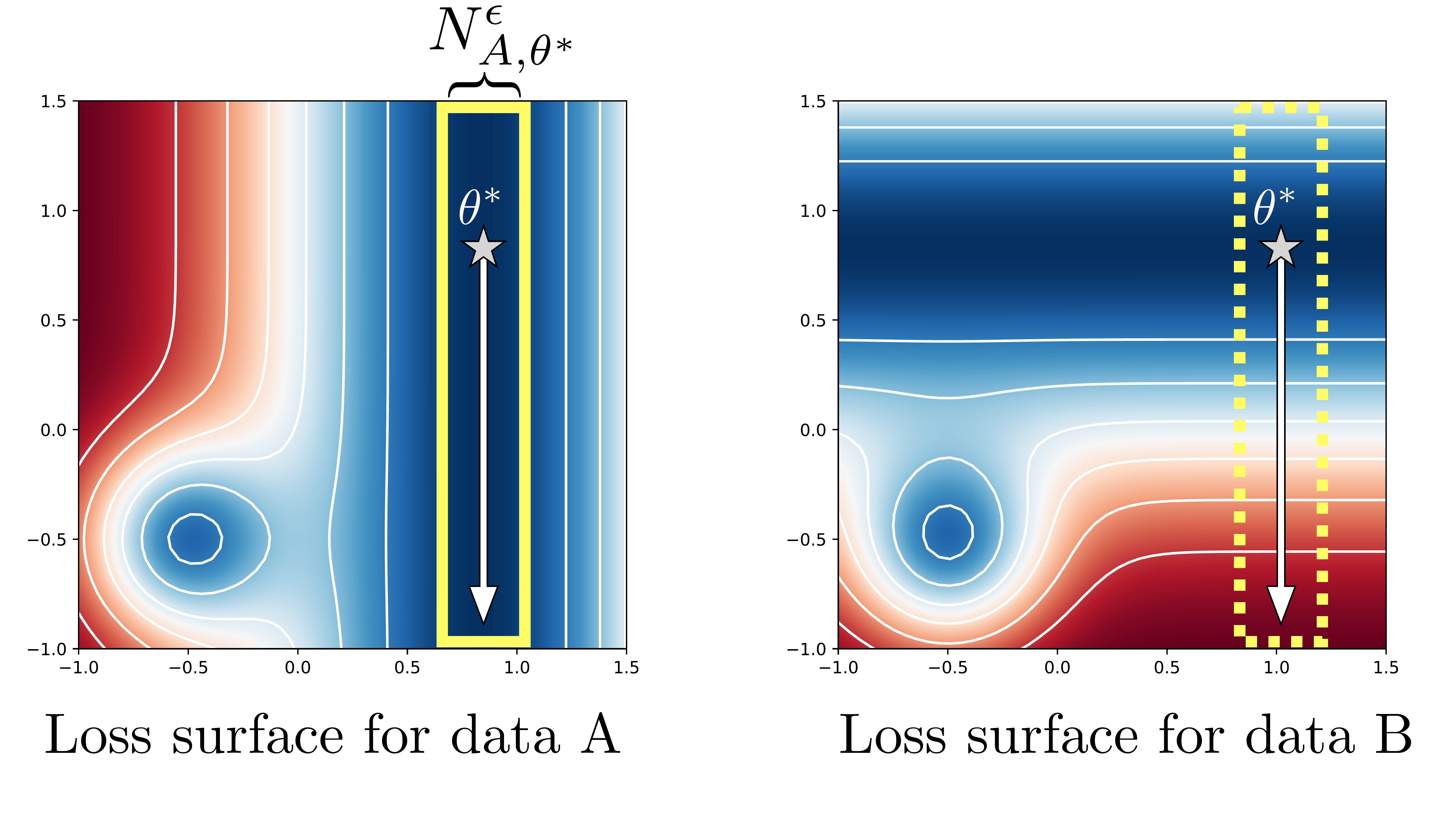}
\vspace{-22pt}
\end{wrapfigure}
Let us characterize the landscape around $\theta^*$ from the perspective of a fixed environment $e\in\EE$. We define the set $N^{\epsilon}_{e,\theta^*}$ to be the largest path-connected region of space containing both $\theta^*$ and the set $\{\theta\in\Theta \ \text{s.t.} |\Ls_{e}(\theta)-\Ls_{e}(\theta^*)|\le\epsilon\ \}$, with $\epsilon>0$.
In other words, if $\theta\in N^{\epsilon}_{e,\theta^*}$ then there exist a path-connected region in parameter space including $\theta^*$ and $\theta$ where each parameter also is in $N^{\epsilon}_{e,\theta^*}$ and its loss on environment $e$ is comparable. From the perspective of environment $e$, all these points are equivalent to $\theta^*$. 
We would like to evaluate the elements of this set with respect to a different environment $e'\ne e$.
We will say that $e'$ is consistent with $e$ in $\theta^*$ if $\max_{\theta\in N^{\epsilon}_{e,\theta^*}} |\Ls_{e'}(\theta)-\Ls_{e}(\theta)\big|$ is small.
Repeating this reasoning for all environment pairs, we arrive at the following \textit{inconsistency score}:
\begin{equation}
    \label{eq:consistency}
    \mathcal{I}^\epsilon(\theta^*) := \max_{(e,e')\in\EE^2} \max_{\theta\in N^{\epsilon}_{e,\theta^*}} |\Ls_{e'}(\theta)-\Ls_{e}(\theta)|. %
\end{equation}
This consistency is our formalization of the principle ``\textit{good explanations are hard to vary}''. Finally, we can write down an invariant learning consistency score for $\mathcal{A}$:
\begin{equation}
    \text{ILC}(\mathcal{A},p_{\theta^0}):= -\E_{\theta^0\sim p(\theta^0)}\left[\mathcal{I}^\epsilon(\mathcal{A}_\infty(\theta^0,\mathcal{E})\right].
    \label{eq:ILC}
\end{equation}
That is, the learning consistency of an algorithm measures the expected consistency across environments of the minimizer it converges to on the pooled data.%
\vspace{-3mm}
\paragraph{Example: low consistency of a classic patchwork solution.} One-hidden-layer networks with sigmoid activations and enough neurons can approximate any function $f^*:[0,1]\to\R$~\citep{cybenko1989approximation}.
In appendix~\ref{app:patchwork} we show how the construction used to obtain the weights leads to a maximally inconsistent solution according to $\mathcal{I}^\epsilon(\theta^*)$, which would not be expected to generalize o.o.d.

\subsection{ILC as a logical AND between landscapes}
 \vspace{-2mm}
\label{sec:mean_landscapes}
\begin{wrapfigure}{r}{0.27\textwidth}
\vspace{-10.5mm}
  \centering
\includegraphics[width=0.97\linewidth]{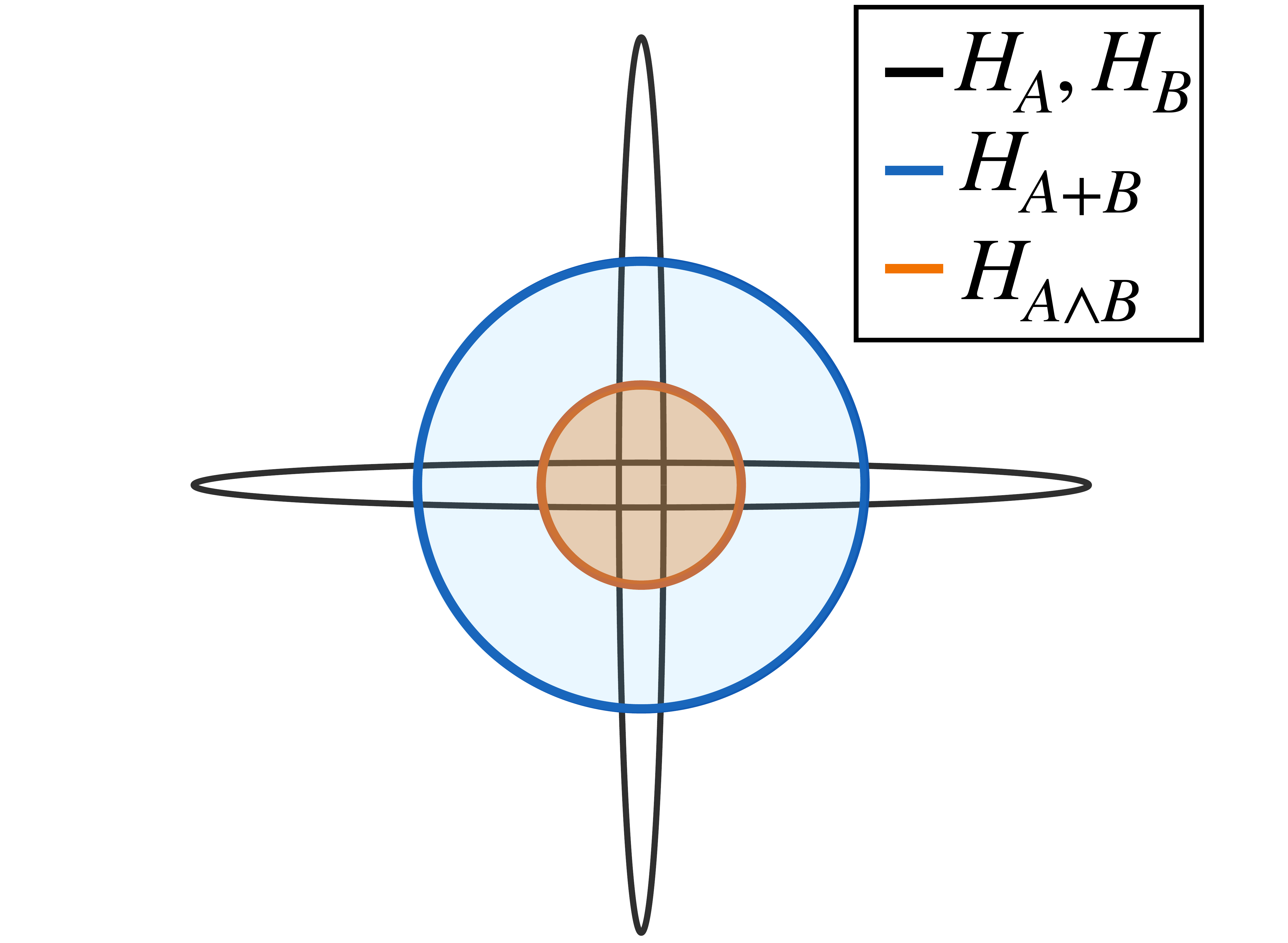}
\vspace{-2mm}
\caption{\small Plotted are contour lines $\theta^\top H^{-1}\theta=1$ for $H_A = \text{diag}(0.05,1)$ and $H_B = \text{diag}(1,0.05)$. $H_{A\land B}$ retains the original volumes, while for $H_{A+B}$ it is $5\times$ bigger. This magnification shows inconsistency of $A$ and $B$.}
\label{fig:karcher}
\vspace{-8mm}
\end{wrapfigure}
Here we draw a connection between our definition of inconsistency and the local geometric properties of the loss landscapes. For the sake of clarity, we consider two environments ($A$ and $B$) and assume $\theta^*$ to be a local minimizer (with zero loss) for both environments.
Using a Taylor approximation\footnote{This provides a useful simplified perspective. Indeed, this \textit{quadratic model} is heavily used in the optimization community~(see e.g. \cite{jastrzkebski2017three,zhang2019algorithmic,mandt2017stochastic}.)
}, we get $\Ls(\theta)\approx\frac{1}{2} (\theta-\theta^*)^\top H_{A+B} (\theta-\theta^*)$ for $\|\theta-\theta^*\|\approx0$, where $H_{A+ B} =\left(H_A+H_B\right)/2$ is the \textit{arithmetic mean} of the Hessians $H_A:=\nabla^2\Ls_A(\theta^*)$ and $H_B:=\nabla^2\Ls_A(\theta^*)$. 
$H_{A+ B}$ does not capture the possibly conflicting geometries of landscape $A$ or $B$: It performs a ``logical OR'' on the dominant eigendirections. In contrast, the \emph{geometric mean}, or Karcher mean, $H_{A\wedge B}$~\citep{ando2004geometric} is affected by the inconsistencies between landscapes: It performs a ``logical AND''. In appendix~\ref{app:arithm_geom}, we give a formal definition of $H_{A\wedge B}$, and show that for diagonal Hessians, $\mathcal{I}^\epsilon(\theta^*) \le 2\epsilon(\frac{\det(H_{A+B})}{\det(H_{A\wedge B})})^2$.
As for the geometric mean of positive numbers, $0\le\det(H_{A\wedge B})\le\det(H_{A+B})$; thus, inconsistency is lowest when \textit{shapes} of $A$ and $B$ are similar -- exactly as in the bottom-left minimizer of Fig.~\ref{fig:ave_prod}.

\vspace{-2mm}
\paragraph{From Hessians to gradients.} We just saw that the consistency of $\theta^*$ is linked to the geometric mean of the Hessians $\{H_e(\theta^*)\}_{e\in\EE}$. Under the simplifying assumption that each $H_e$ is diagonal\footnote{It was shown in~\citep{becker1988improving} and recently in ~\citep{adolphs2019ellipsoidal,singh2020woodfisher} that neural networks have a strong diagonal dominance of the Hessian matrix at the end of training.} and all eigenvalues $\lambda^e_i$ are positive, %
their geometric mean is $H^{\land}:= \text{diag} ((\prod_{e\in\EE} \lambda^e_1)^{\nicefrac{1}{|\EE|}},\dots,(\prod_{e\in\EE} \lambda^e_n)^{\nicefrac{1}{|\EE|}})$. The curvature of the corresponding loss in the $i$-th eigendirection depends on how consistent the curvatures of each environment are in that direction.
Consider now optimizing from a point $\theta^k$; gradient descent reads $\theta^{k+1} = \theta^k-\eta H^+ (\theta^k-\theta^*)$, where $H^+ := \text{diag} (\frac{1}{|\EE|}\sum_{e\in\EE} \lambda^e_1,\dots, \frac{1}{|\EE|}\sum_{e\in\EE} \lambda^e_n)$. For $\eta$ small enough\footnote{Smaller than $1/\lambda_{\text{max}}$, $\lambda_{\text{max}}$ is the maximum eigenvalue of Hessians from different environments,}, we have $|\theta^{k+1}_i-\theta^*_i| =(1-\eta\frac{1}{|\EE|}\sum_{e\in\EE}\lambda_i^e) |\theta^{k}_i-\theta^*_i|$. 
As noted, this choice maximises the speed of convergence to $\theta^*$, but does not take into account whether this minimizer is consistent. %
We can reduce the speed of convergence on directions where landscapes have different curvatures -- which would lead to a high inconsistency --
by following the gradients from the geometric mean of the landscapes, as opposed to the arithmetic mean. 
I.e, we substitute the full gradient $\nabla\Ls(\theta) =H^+ (\theta^k-\theta^*)$ with $\nabla\Ls^\land(\theta) = H^\land (\theta^k-\theta^*)$. Also, we have that\footnote{This holds if $\theta-\theta^*$ is positive, otherwise we have $\nabla\Ls^\land(\theta) = -\left(\prod_{e\in\EE}|\nabla\Ls_e(\theta)|\right)^{\nicefrac{1}{|\EE|}}$.} $\nabla\Ls^\land(\theta) = \left(\prod_{e\in\EE}\nabla\Ls_e(\theta)\right)^{\nicefrac{1}{|\EE|}}$: to reduce the speed of convergence in directions with inconsistency, we can take the element-wise geometric mean of gradients from different environments~(see also Fig.~\ref{fig:hessian_scales} in the appendix).

\vspace{-2mm}
\subsection{Masking gradients with a logical AND}
\vspace{-2mm}
\label{sec:logical_and}
The element-wise geometric mean of gradients, instead of the arithmetic mean, increases consistency in the convex quadratic case. However, there are a few practical limitations:
\vspace{-3mm}
\begin{enumerate}[leftmargin=6mm]
  \setlength\itemsep{0.05em}
\item[(i)] The geometric mean is only defined when all the signs are consistent. It is still to be defined how sign inconsistencies, which can occur in non-convex settings, should be dealt with.
\item[(ii)] It provides little flexibility for `partial' agreement: Even a single zero gradient component in one environment stops optimization in that direction.
\item[(iii)] For numerical stability, it needs to be computed in $\log$ domain (more computationally expensive).
\item[(iv)] Adaptive step-size schemes (e.g.\ Adam~\citep{kingma2014adam}) rescale the signal component-wise for local curvature adaptation. The exact magnitude of the geometric mean would be ignored and most of the difference from arithmetic averaging will come from the zero-ed components.
\end{enumerate} 
\vspace{-3mm}
(i) can be overcome by treating different signs as zeros, resulting in a geometric mean of 0 if there is any sign disagreement across environments for a gradient component.
For (ii) we can allow for \textit{some} disagreement (with a hyperparameter), by not masking out if there is a large percentage of environments with gradients in that direction.
(iii) and (iv) can be addressed together: Since the final magnitude will be rescaled except for masked components, i.e. where the geometric mean is 0, we can use the average gradients (fast to compute) and mask out the components based on the sign agreement (computable avoiding the $\log$ domain).

\vspace{-2mm}
\paragraph{The AND-mask.} We translate the reasoning we just presented to a practical algorithm that we will refer to as the \textit{AND-mask}.
In its most simple implementation%
, we zero out those gradient components with respect to weights that have \textit{inconsistent signs} across environments.
Formally, the masked gradients at iteration $k$ are $m_t(\theta^k)\odot\nabla \mathcal{L}(\theta^k)$, where $m_t(\theta^k)$ vanishes for any component where there are less than $t\in\{d/2, d/2+1,\dots,d\}$ agreeing gradient signs across environments ($d$ is the number of environments in the batch), and is equal to one otherwise.
For convenience, our implementation of the AND-mask uses a \textit{threshold} $\tau \in [0, 1]$ as hyper-parameter instead of $t$, such that $t = \frac{d}{2}(\tau + 1)$.
Mathematically, for every component $[m_\tau]_j$ of $m_\tau$, $[m_\tau]_j = \mathds{1}\left[\tau d \leq |\sum_e{\mathrm{sign}([\nabla\mathcal L_e]_j)|}\right]$. %

Computing the AND-mask has the same time %
and space complexity of standard gradient descent, i.e., linear in the number of examples that we average.
Due to its simplicity and computational efficiency, this is the algorithm that we will use in the experiment section. 
As a first result, we show that following the AND-masked gradient leads to convergence in the directions made visible by the AND-mask. The proof is presented in appendix~\ref{app:proof_and}.
\begin{restatable}{proposition}{rateMGD}\label{prop:convergence_rate_masked_GD}
Let $\mathcal{L}$ have $L$-Lipschitz gradients and consider a learning rate $\eta\le1/L$. After $k$ iterations, AND-masked GD visits at least once a point $\theta$ where $\|m_t(\theta)\odot\nabla \mathcal{L}(\theta)\|^2\le \mathcal{O}(1/k)$.
\end{restatable}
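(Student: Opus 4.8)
The plan is to mimic the classical descent-lemma argument for gradient descent on functions with $L$-Lipschitz gradients, treating the AND-masked update as ordinary gradient descent along a modified direction. I would write $g^k := \nabla\mathcal{L}(\theta^k)$ and $\tilde g^k := m_t(\theta^k)\odot g^k$ for the masked gradient, so that the iteration reads $\theta^{k+1} = \theta^k - \eta\,\tilde g^k$. I would also make explicit the (implicit) assumption that $\mathcal{L}$ is bounded below, setting $\mathcal{L}^\star := \inf_\theta \mathcal{L}(\theta) > -\infty$.

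The key structural observation is that $m_t(\theta^k)$ is $\{0,1\}$-valued component-wise, hence idempotent under the Hadamard product. This yields the identity $\langle g^k, \tilde g^k\rangle = \sum_j [m_t]_j\,[g^k]_j^2 = \|\tilde g^k\|^2$, since $[m_t]_j\in\{0,1\}$ forces $[m_t]_j = [m_t]_j^2$. In words: although we step along the masked direction $\tilde g^k$ rather than the true gradient $g^k$, the masked direction still correlates nonnegatively with $g^k$ — in fact exactly by $\|\tilde g^k\|^2$ — so it remains a descent direction for the pooled loss.

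Next I would invoke the quadratic upper bound already recorded in the excerpt, namely $\mathcal{L}(\theta^{k+1}) \le \mathcal{L}(\theta^k) + \langle g^k, \theta^{k+1}-\theta^k\rangle + \tfrac{L}{2}\|\theta^{k+1}-\theta^k\|^2$. Substituting $\theta^{k+1}-\theta^k = -\eta\tilde g^k$ together with the identity above gives $\mathcal{L}(\theta^{k+1}) \le \mathcal{L}(\theta^k) - \eta\big(1 - \tfrac{L\eta}{2}\big)\|\tilde g^k\|^2$. Since $\eta \le 1/L$ forces $1 - \tfrac{L\eta}{2} \ge \tfrac12$, this simplifies to the per-step decrease $\mathcal{L}(\theta^{k+1}) \le \mathcal{L}(\theta^k) - \tfrac{\eta}{2}\|\tilde g^k\|^2$.

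Finally I would telescope over $j = 0,\dots,k-1$ to obtain $\tfrac{\eta}{2}\sum_{j=0}^{k-1}\|\tilde g^j\|^2 \le \mathcal{L}(\theta^0) - \mathcal{L}^\star$, and then bound the minimal term by the average to conclude $\min_{0\le j < k}\|m_t(\theta^j)\odot\nabla\mathcal{L}(\theta^j)\|^2 \le \tfrac{2(\mathcal{L}(\theta^0)-\mathcal{L}^\star)}{\eta k} = \mathcal{O}(1/k)$; the visited point $\theta$ in the statement is the iterate attaining this minimum. The one point I would flag as the crux, rather than a genuine obstacle, is establishing that the masked direction stays a descent direction even though the mask discards components and changes from step to step; the idempotence identity dispatches this cleanly, after which the otherwise-standard argument goes through unchanged. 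A minor caveat worth a remark is that $m_t$ is computed from the per-environment gradients $\nabla\mathcal{L}_e$ while descent is measured against the pooled $\nabla\mathcal{L}$, but since the identity only uses that $m_t$ is binary, this mismatch causes no difficulty.
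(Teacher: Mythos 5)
Your proposal is correct and follows essentially the same route as the paper's proof: the descent lemma from $L$-smoothness, the observation that the binary mask makes $\langle \nabla\mathcal{L}, m_t\odot\nabla\mathcal{L}\rangle = \|m_t\odot\nabla\mathcal{L}\|^2$ (which the paper uses implicitly in its first display), the step-size condition $\eta \le 1/L$ giving a per-step decrease of $(\eta/2)\|m_t\odot\nabla\mathcal{L}\|^2$, and telescoping plus a min-vs-average bound. The only cosmetic difference is that you assume $\mathcal{L}$ bounded below by a generic $\mathcal{L}^\star$, whereas the paper uses the nonnegativity of the loss ($\ell \ge 0$ by definition in Section 2) to bound the telescoped sum by $\mathcal{L}(\theta^0)$.
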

\vspace{-1mm}

\begin{wrapfigure}{r}{0.3\textwidth}
\vspace{-5.8mm}
    \centering
    \includegraphics[width=.8\linewidth]{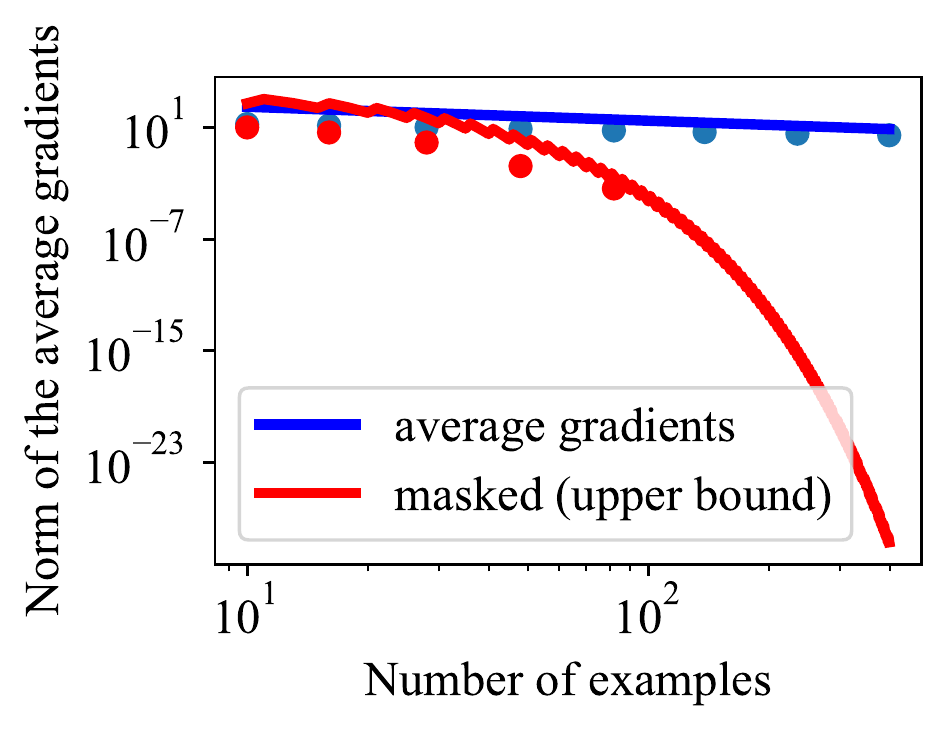}
    \vspace{-3.2mm}
    \caption{\small Magnitude of gradient (average or masked) on random data ($|\theta|$ = 3000, $t= 0.8 d$).}
    \vspace{-5.2mm}
    \label{fig:random}
\end{wrapfigure}
\vspace{-2mm}
\paragraph{Behaviour in the face of randomness.} Here we put the AND mask through a theoretical test: For gradients coming from different environments that are inconsistent~(or even random), how fast does the AND mask reduce the magnitude of the step taken in parameter space, compared to standard GD?
\textit{In case of inconsistency, the AND mask should quickly make the gradient steps more conservative.}

To assess this property, we consider a fixed set of $n$ parameters $\theta$ and gradients $\nabla \Ls_{e}$ drawn independently from a multivariate Gaussian with zero mean and unit covariance.
\begin{restatable}{proposition}{rateRandomGrads}\label{prop:rate_random_grads}
Consider the setting we just outlined, with $\Ls=(1/d)\sum_{e=1}^d\Ls_e$. While $\E\|\nabla \Ls(\theta)\|^2=\mathcal{O}(n/d)$, we have that $\forall t\in\{d/2+1,\dots,d\}, \exists c\in(1,2]$ such that $\E\|m_t(\theta)\odot\nabla \Ls(\theta)\|^2\le\mathcal{O}(n/c^d)$.
\end{restatable}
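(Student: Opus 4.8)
The plan is to treat the two claims separately, exploiting that the $nd$ scalar entries $[\nabla\Ls_e]_j$ ($e=1,\dots,d$, $j=1,\dots,n$) are i.i.d.\ standard Gaussians. Write $\bar g_j := [\nabla\Ls(\theta)]_j = \frac1d\sum_{e=1}^d [\nabla\Ls_e]_j$ for the $j$-th averaged component. For the first claim each $\bar g_j$ is a mean of $d$ independent unit-variance Gaussians, hence $\bar g_j \sim \mathcal{N}(0,1/d)$ and $\E[\bar g_j^2]=1/d$; summing over the $n$ coordinates gives $\E\|\nabla\Ls(\theta)\|^2 = n/d = \mathcal{O}(n/d)$ immediately.

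For the masked claim the difficulty is that the mask $[m_t(\theta)]_j$ and the average $\bar g_j$ are \emph{not} independent: both are built from the same $d$ Gaussians $X_1,\dots,X_d := [\nabla\Ls_1]_j,\dots,[\nabla\Ls_d]_j$. I would decouple them by conditioning on the sign pattern and using the classical fact that for a Gaussian the sign $s_e=\mathrm{sign}(X_e)$ and the magnitude $|X_e|$ are independent, with $\E|X_e|=\sqrt{2/\pi}$ and $\Var(|X_e|)=1-2/\pi$. Since $[m_t(\theta)]_j$ is a function of the signs alone---explicitly $[m_t]_j = \mathds{1}[\,|S|\ge 2t-d\,]$ with $S:=\sum_e s_e$---I can first compute the conditional second moment of $\bar g_j$ given the signs.

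Conditioning on the signs, $\bar g_j = \frac1d\sum_e s_e|X_e|$ has conditional mean $\sqrt{2/\pi}\,S/d$ and conditional variance $(1-2/\pi)/d$, so $\E[\bar g_j^2 \mid \mathrm{signs}] = (1-2/\pi)/d + (2/\pi)S^2/d^2$. Taking expectations, using that $[m_t]_j$ depends only on $S$, and bounding $S^2\le d^2$ on the event $\{|S|\ge 2t-d\}$ yields the clean estimate $\E[[m_t]_j\,\bar g_j^2] \le C\,\Pr(|S|\ge 2t-d)$ for an absolute constant $C$. Summing over $j$ reduces the entire claim to a binomial tail, since $k:=\#\{e: X_e>0\}\sim\mathrm{Binomial}(d,1/2)$ and $\Pr(|S|\ge 2t-d)=2\Pr(k\ge t)$ whenever $t>d/2$.

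The final step is a Chernoff/Hoeffding bound on this tail. Writing $t-d/2=\delta d$ with $\delta=(2t-d)/(2d)>0$ for $t>d/2$, Hoeffding gives $\Pr(k\ge t)\le \exp(-2\delta^2 d)$, so $\E\|m_t(\theta)\odot\nabla\Ls(\theta)\|^2 \le 2Cn\,\exp(-2\delta^2 d) = \mathcal{O}(n/c^d)$ with $c=e^{2\delta^2}>1$; because $t\le d$ forces $\delta\le 1/2$ we also get $c\le e^{1/2}<2$, placing $c\in(1,2]$ as claimed. The main obstacle is genuinely the mask--gradient dependence of the middle step; once it is dissolved by the sign/magnitude factorization, everything collapses to a standard large-deviation estimate. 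I would also flag that the exponential speed-up is only effective when the threshold fraction $\tau=(2t-d)/d$ stays bounded above $0$ (i.e.\ $t$ a fixed fraction of $d$ above $d/2$): for $t=d/2+1$ the rate $c=e^{2/d^2}$ degrades toward $1$, so the statement is really about thresholds that scale with $d$.
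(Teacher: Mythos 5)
Your proof is correct, and it follows the same high-level strategy as the paper's: both arguments break the dependence between the mask $m_t$ and the averaged gradient by conditioning on the sign pattern, under which the magnitudes $|X_e|$ are conditionally independent half-normals, and both then reduce the claim to a binomial tail probability. The differences are in execution. Where you compute the conditional second moment exactly via the sign/magnitude factorization, $\E[\bar g_j^2\mid \mathrm{signs}]=(1-2/\pi)/d+(2/\pi)S^2/d^2$, and then bound $S^2\le d^2$, the paper bounds the conditional second moment for any admissible sign pattern by the all-signs-agree case (a monotonicity claim that amounts to the same $S^2\le d^2$ step) and uses $\sigma^2$ as the resulting bound; these are equivalent. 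The routes genuinely diverge at the tail estimate: the paper bounds the kept-probability mass by $(d-t)\binom{d}{t}(1/2)^{d-1}$ and then invokes an asymptotic expansion of $\binom{d}{t}$ (citing Buri\'c--Elezovi\'c) to extract a rate $q^d$ with $q<2$, discarding polynomial factors, whereas you apply Hoeffding's inequality directly to get $\Pr(k\ge t)\le\exp(-2\delta^2 d)$ with $\delta=(t-d/2)/d$, hence $c=e^{2\delta^2}\in(1,e^{1/2}]$. Your version is more self-contained, gives explicit constants, and avoids a slip in the paper's final line, which states the rate as $\mathcal{O}(n/(2-q)^d)$ where it should read $\mathcal{O}(n/(2/q)^d)$ (as written, the paper's expression would grow with $d$). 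Your closing caveat --- that the exponential gain requires $t-d/2$ to scale linearly with $d$ and degenerates for $t=d/2+1$ --- is also correct and mirrors the paper's own remark that the exponential rate is lost at $t=d/2$.
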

\vspace{-2mm}
The proof is presented in appendix~\ref{app:prop2}, and an illustration with numerical verification in Fig.~\ref{fig:random} (the magnitudes of masked gradients (\textcolor{red}{\textbullet}) for more than 100 examples were always zero in the numerical verification).
Intuitively, in the presence of purely random patterns, the AND-mask has a desirable property: it decreases the strength of these signals exponentially fast, as opposed to linearly.

\vspace{-2mm}
\section{Experiments}\label{sec:reg_reg_reg}
\vspace{-2mm}
Real-world datasets are generated by (causal) generative processes which share mechanisms \citep{Pearl2009}.
However, mechanisms and spurious signals are often entangled, making it hard to assess what part of the learning signal is due to either.
As the goal of this paper is to dissect these two components to understand how they ultimately contribute to the learning process, we create a simple synthetic dataset that allows us to control the complexity, intensity, and number of shortcuts in the data. 
After that, we evaluate whether spurious signals can be detected even in high-dimensional networks and datasets by testing the AND-mask on a memorization task similar to the one proposed in \cite{zhang2016understanding}, and on a behavioral cloning task using the game CoinRun \citep{cobbe2018quantifying}.

\subsection{The synthetic memorization dataset}
\label{sec:exp_synthetic}
We introduce a binary classification task.
The input dimensionality is $d = d_M + d_S$.
$p(y|x_{d_M})$ is the same across \textit{all} environments (i.e.\ the \textit{mechanism}).
$p(y|x_{d_S}, e)$ is \textit{not the same} across all environments (the \textit{shortcuts}).
While the mechanism is shared, it needs a highly non-linear decision boundary to classify the data.
The shortcuts are not shared across environments, but provide a simple way to classify the data, even when pooling all the environments together.
See Figure \ref{fig:spiral_dataset} for a concrete example with $d_M$ and $d_S$ equal to 2, and two environments ($A$ and $B$).
The spirals (on $d_M$) are invariant but hard to model.
The shortcuts (on $d_S$) are simple blobs but different in every environment: in $A$, linearly separable through a vertical decision boundary, in $B$ with a horizontal one.
If the two environments are pooled, a new diagonal decision boundary emerges on the shortcut dimensions as the most `natural' one.
While this perfectly classifies data in both environments $A$ and $B$, critically \textit{it would have not been found by training on either partition $A$ or $B$ alone}.
The out-of-distribution (o.o.d.) test data has the same mechanism but random shortcuts. Therefore, any method relying exclusively on the shortcuts will have chance-level o.o.d. performance.
Details about the dataset, baselines, and training curves are reported in appendix~\ref{app:synth_data}.

Despite the apparent simplicity of this dataset, note that it is challenging to find the invariant mechanism. In high dimensions, even with tens of pooled environments, the shortcuts allow for a \textit{simple} classification rule under almost every classical definition of `simple': the boundary is \textit{linear}, it has a \textit{large margin}, it can be expressed with \textit{small weights}, it is \textit{fast to learn}, robust to input noise, and has \textit{perfect accuracy and no i.i.d.\ generalization gap}.
Finding the complex decision boundary of the spirals, instead, is a fiddly process and arguably a much slower path towards small loss.

\begin{figure}[t]
	\begin{center}
		\includegraphics[width=.88\textwidth]{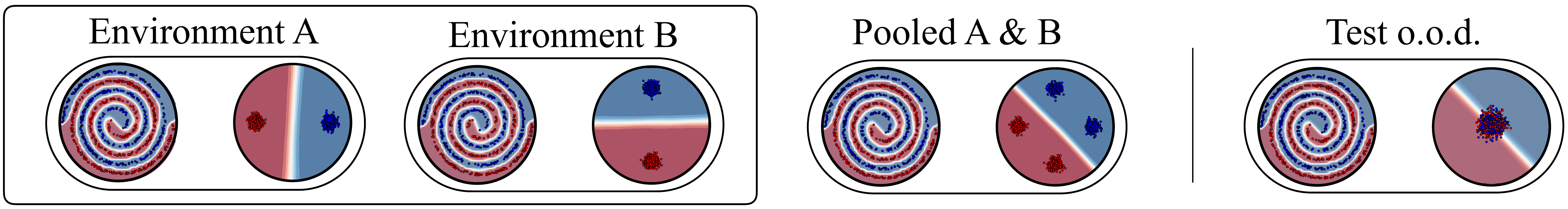}
	\end{center}
	\vspace{-2mm}
	\caption{\small A 4-dimensional instantiation of the synthetic memorization dataset for visualization. Every example is a dot in both circles, and it can be classified by finding either of the ``oracle'' decision boundaries shown.}
	\label{fig:spiral_dataset}
	\vspace{-2mm}
\end{figure}

\vspace{-2mm}
\paragraph{Baselines.}
We evaluate several domain-agnostic baselines (all multilayer perceptrons) with some of the most common regularizers used in deep learning --- Dropout, L1, L2, Batch normalization.
We also consider methods that explicitly make use of the environment labels, namely: \textit{(i)} Domain Adversarial Neural Networks (DANN) \citep{ganin2016domain}, a method specifically designed to address domain adaptation by obfuscating domain information with an adversarial classifier; \textit{(ii)} Invariant Risk Minimization (IRM) \citep{arjovsky2019invariant}, discussed in detail in appendix~\ref{app:synth_data}. 
The AND-mask is trained with the same configurations in Table \ref{tab:hyper_param_baselines}.

\begin{wrapfigure}{r}{0.4\textwidth}
\vspace{-5mm}
  \centering
  \includegraphics[width=\linewidth]{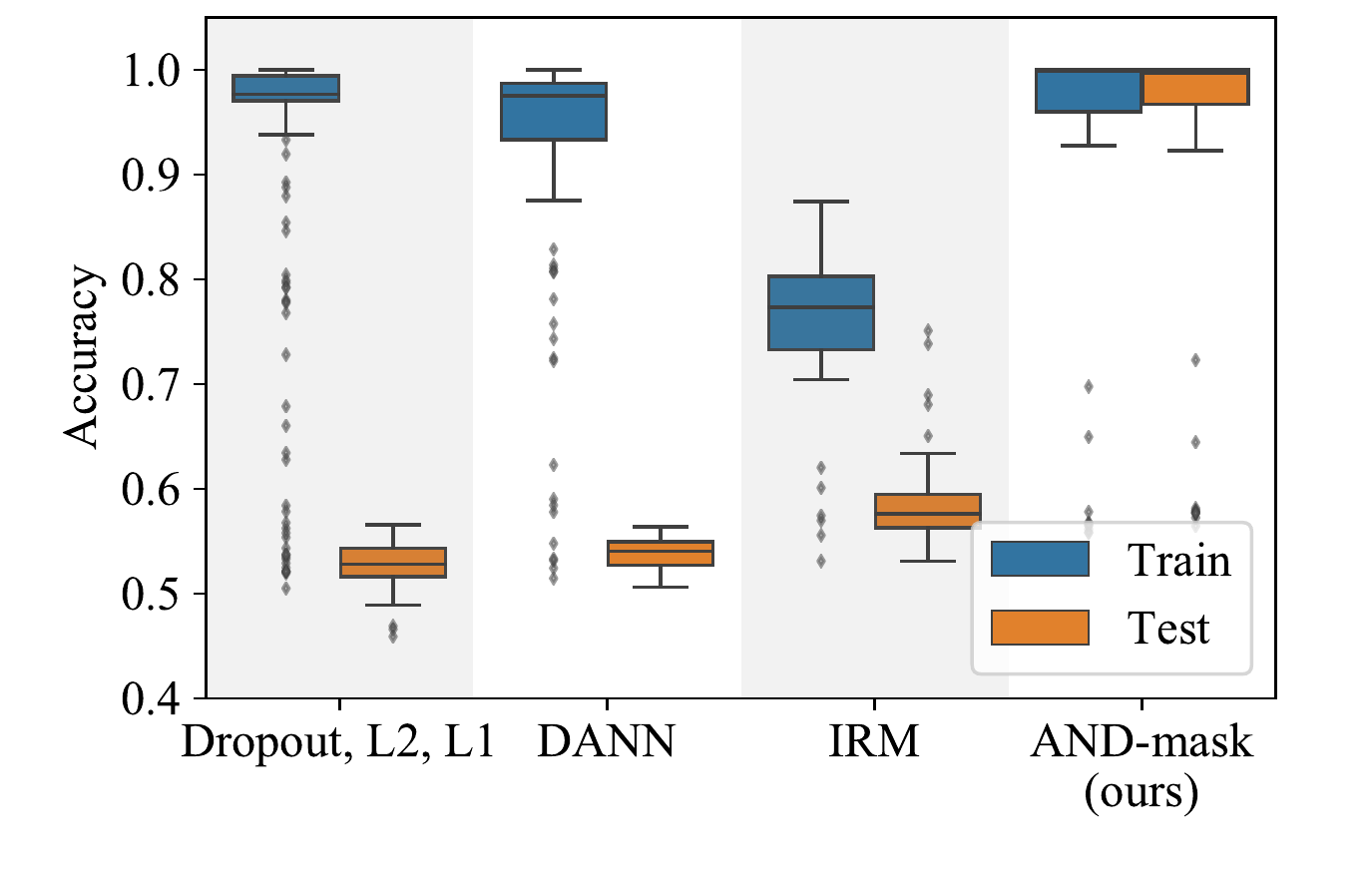}
\vspace{-8mm}
\caption{\small Results on the synthetic dataset.}
\label{fig:exp_1_results}
\vspace{-6mm}
\end{wrapfigure}
\vspace{-2mm}
\paragraph{Results.}
Fig. \ref{fig:exp_1_results} shows training and test accuracy.
DANN fails because it can align the representation-layer distributions from different environments using only shortcuts, such that they become indistinguishable to the domain-discriminating classifier.
The AND-mask was the only method to achieve perfect test accuracy, by fitting the spirals instead of the shortcuts.
In particular, the combination of the AND-mask with L1 or L2 regularization gave the most robust results overall, as they help suppress neurons that at initialization are tuned towards the shortcuts.

\vspace{-2mm}
\paragraph{Correlations between average, memorization and generalization gradients.}
Due to the synthetic nature of the dataset, we can intervene on its data-generating process in order to examine the learning signals coming from the mechanisms and from the shortcuts.
We isolate the two and measure their contribution to the average gradients, as we vary the agreement threshold of the mask.
More precisely, we look at the gradients computed with respect to the weights of a randomly initialized network for different sets of data:
\begin{enumerate*}[label=(\roman*)]
  \item The original data, with mechanisms and shortcuts.
  \item Randomly permuting the dataset over the mechanisms dimensions, thus leaving the ``memorization'' signal of the shortcuts.
  \item Randomly permuting over the shortcuts dimensions, isolating the ``generalization'' signal of the mechanisms alone.
\end{enumerate*}{}
Figure \ref{fig:norm_grad_rand} shows the correlation between the components of the original average gradient (i) and the shortcut gradients ((ii), dashed line), and between the original average gradients and the mechanism gradients ((iii), solid line).
\begin{wrapfigure}{r}{0.4\textwidth}
	\centering
	\vspace{-4.2mm}
	\includegraphics[width=\linewidth]{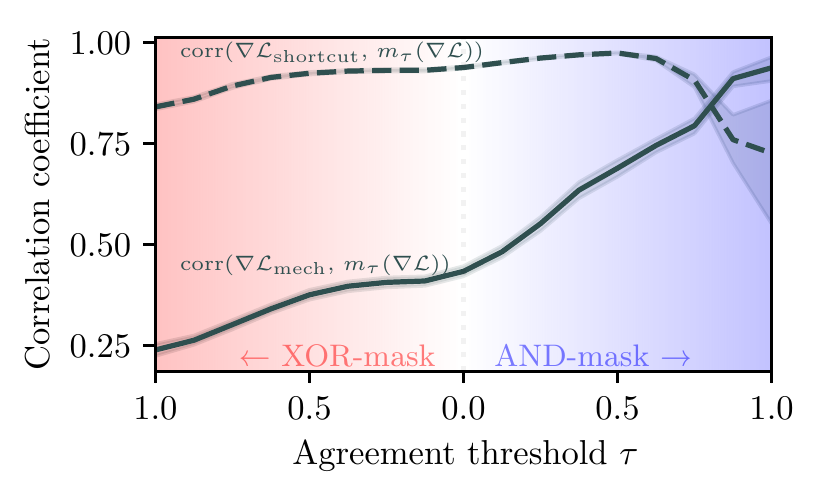}
	\vspace{-8mm}
	\caption{Gradient correlations.}
	\label{fig:norm_grad_rand}
	\vspace{-4mm}
\end{wrapfigure}
While the signal from the mechanisms is present in the original average gradients (i.e.\ $\rho \approx 0.4$ for $\tau = 0$), its magnitude is smaller and it is `drowned' by the memorization signal.
Instead, increasing the threshold of the AND-mask (right side) suppresses memorization gradients due to the shortcuts, and for $\tau \approx 1$ most of the gradient components remaining contain signal from the mechanism.
On the left side, we test the other side of our hypothesis: An XOR-mask zeroes out consistent gradients, preserves those with different signs, and results in a sharper decrease of the correlation with the mechanism gradients.

\subsection{Experiments on CIFAR-10}
\label{sec:exp_cifar10}
\vspace{-1mm}
\paragraph{Memorization in a vision task.} \cite{zhang2016understanding} showed that neural networks trained with standard regularizers --- like L2 and Dropout --- can still memorize large training datasets with \textit{shuffled} labels, i.e.\ reaching $\approx$100\% training accuracy.
Their experiments raised significant questions about the generalization properties of neural networks and the role of regularizers in constraining the hypothesis class.
Our hypothesis is that ILC --- for example implemented as the AND-mask --- should prevent memorization on a similar task with the shuffled labels, as gradients will tend to largely `disagree' in the absence of a shared mechanism. 
However, when the labels are \textit{not shuffled}, ILC should have a much weaker effect, as real shared mechanisms are still present in the data.

To test our hypothesis, we ran an experiment that closely resembles the one in \citep{zhang2016understanding} on CIFAR-10.
We trained a ResNet%
on CIFAR-10 with \textit{random labels}, with and without the AND-mask. 
In all experiments we used batch size 80, and treated each example as its own ``environment''. 
Recall that standard gradient averaging is equivalent to an AND-mask with threshold 0.
As shown in Figure \ref{fig:cifar}, the ResNet with standard average gradients memorized the data, while slightly increasing the threshold for the AND-mask quickly prevented memorization (dark blue line).
In contrast, training the same networks on the dataset \textit{with the original labels} resulted in both of them converging and generalizing to the test set, confirming that the mask did not significantly affect the generalization error with a general underlying mechanism in the data.%

\begin{wrapfigure}{r}{0.35\textwidth}
	\vspace{-6.3mm}
	\centering
	\includegraphics[width=1\linewidth]{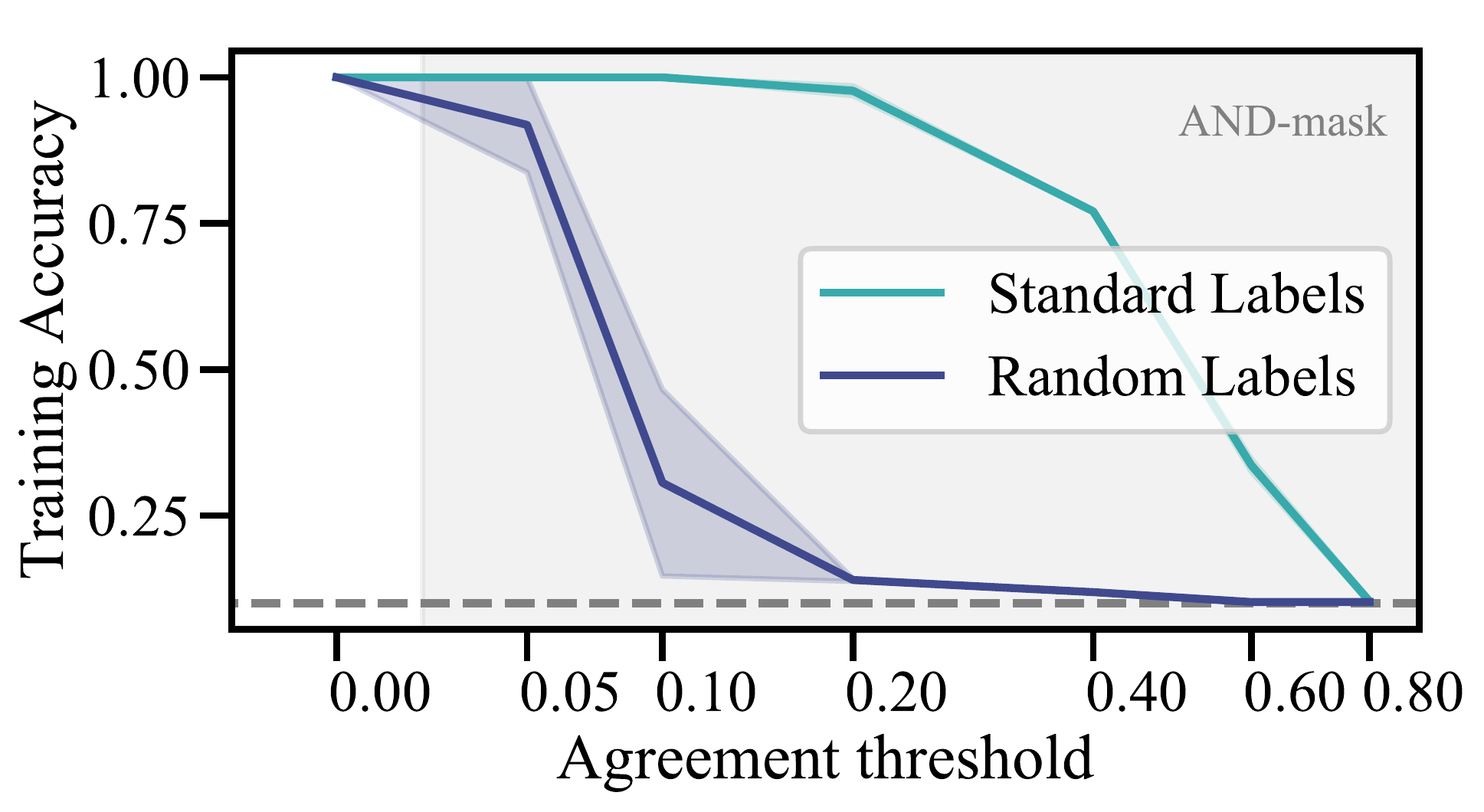}
	\vspace{-1mm}
	\caption{\small As the AND-mask threshold increases, memorization on CIFAR-10 with random labels is quickly hindered.}
	\label{fig:cifar}
	\vspace{-2mm}
\end{wrapfigure}
Note that there is no standard notion of environments in CIFAR-10, which is why we treated every example as coming from its own environment.
This assumption is not unreasonable, as every image in the dataset was literally collected in a different physical environment.
If anything, it is the standard i.i.d. assumption that hides this variety behind a notion of a single distribution encompassing all environments. 
The results of this experiment further support this interpretation, and can serve as evidence that --- in some cases --- we might be able to identify invariances even without an explicit partition into environments, as this can be already identified at the level of individual examples. %

\begin{wrapfigure}{r}{0.345\textwidth}
	\vspace{-6.1mm}
	\centering
	\includegraphics[width=0.5444444\linewidth]{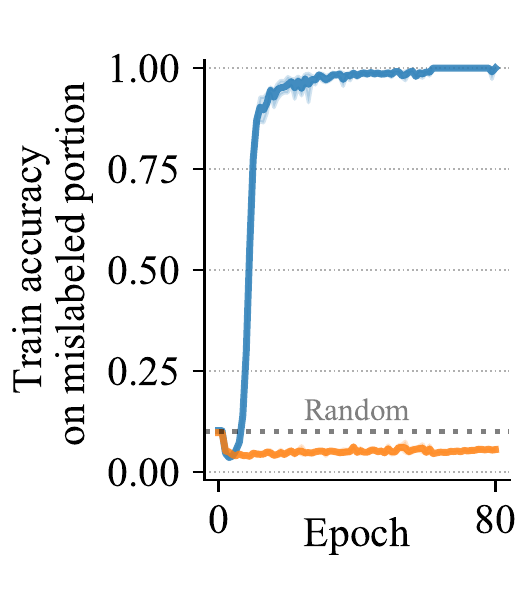}
	\hspace{-2mm}
	\includegraphics[width=0.43555555\linewidth]{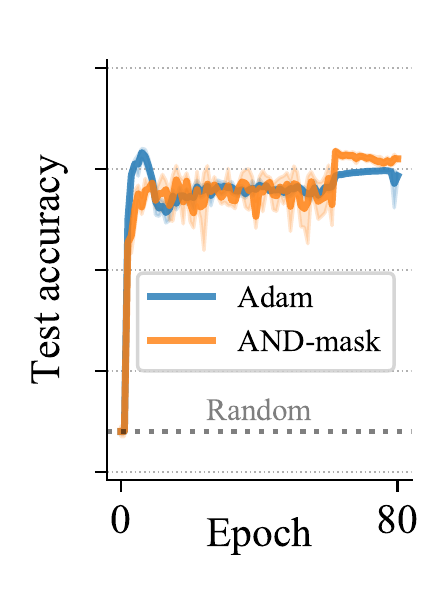}
	\caption{{\small The AND-mask prevents overfitting to the incorrectly labeled portion of the training set (left) without hurting the test accuracy (right).}}
	\label{fig:cifar_25pct}
	\vspace{-5mm}
\end{wrapfigure}

\vspace{-3mm}
\paragraph{Label noise.}
Following up on this experiment, we test how the AND-mask performs in the presence of label noise, i.e.\ when a portion of the labels in the training set are randomly shuffled (25\% here).
According to our hypothesis, gradients computed on examples with random labels should disagree and get masked out by the AND-mask, while signal from correctly labeled data should contribute to update the model.
As shown in Figure \ref{fig:cifar_25pct}, the performance on the \textit{incorrectly} labeled portion of the dataset is well \textit{below} chance for the AND-mask (as it predicts correctly despite the wrong labels), while the baseline again memorizes the incorrect labels.
On the test set (with untouched labels), the baseline peaks early then decreases as the model overfits, while the AND-mask slowly but steadily improves.

\vspace{-2mm}
\subsection{Behavioral Cloning on CoinRun}
\vspace{-1mm}
\label{sec:coinrun}

CoinRun \citep{cobbe2019quantifying} is a game introduced to test how RL agents generalize to novel situations.
The agent needs to collect coins, jumping on top of walls and boxes and avoiding enemies.\footnote{See Figure \ref{fig:coinrun_openai} in appendix~\ref{app:coinrun} for a visualization of the game.}
Each level is procedurally generated --- i.e.\ it has a different combination of sprites, background, and layout --- but the physics and goals are invariant.
\cite{cobbe2019quantifying} showed that state-of-the-art RL algorithms fail to model these invariant mechanisms, performing poorly on new levels unless trained on thousands of them.
To test our hypothesis, we set up a behavioral cloning task using CoinRun.\footnote{To obtain a robust evaluation, we preferred to approach behavioral cloning instead of the full RL problem, as it is a standard supervised learning task and has substantially fewer moving parts than most deep RL algorithms.}
We start by pre-training a strong policy $\pi^*$ using standard PPO \citep{schulman2017proximal} for 400M steps on the full distribution of levels.
We then generate a dataset of pairs $(s, \pi^*(a|s))$ from the on-policy distribution. 
The training data consists of 1000 states from each of 64 levels, while test data comes from 2000 levels.
A ResNet-18 $\hat{\pi}_\theta$ is then trained to minimize the loss $D_\mathrm{KL}(\pi^* || \hat{\pi}_\theta)$ on the training set.
We compare the generalization performance of regular Adam to a version that uses the AND-mask.
For each method we ran an automatic hyperparameter optimization study using Tree-structured Parzen Estimation \citep{bergstra2013making}
of 1024 trials.
\begin{wrapfigure}{r}{0.22\textwidth}
	\vspace{-2mm}
	\centering
	\includegraphics[width=0.92\linewidth]{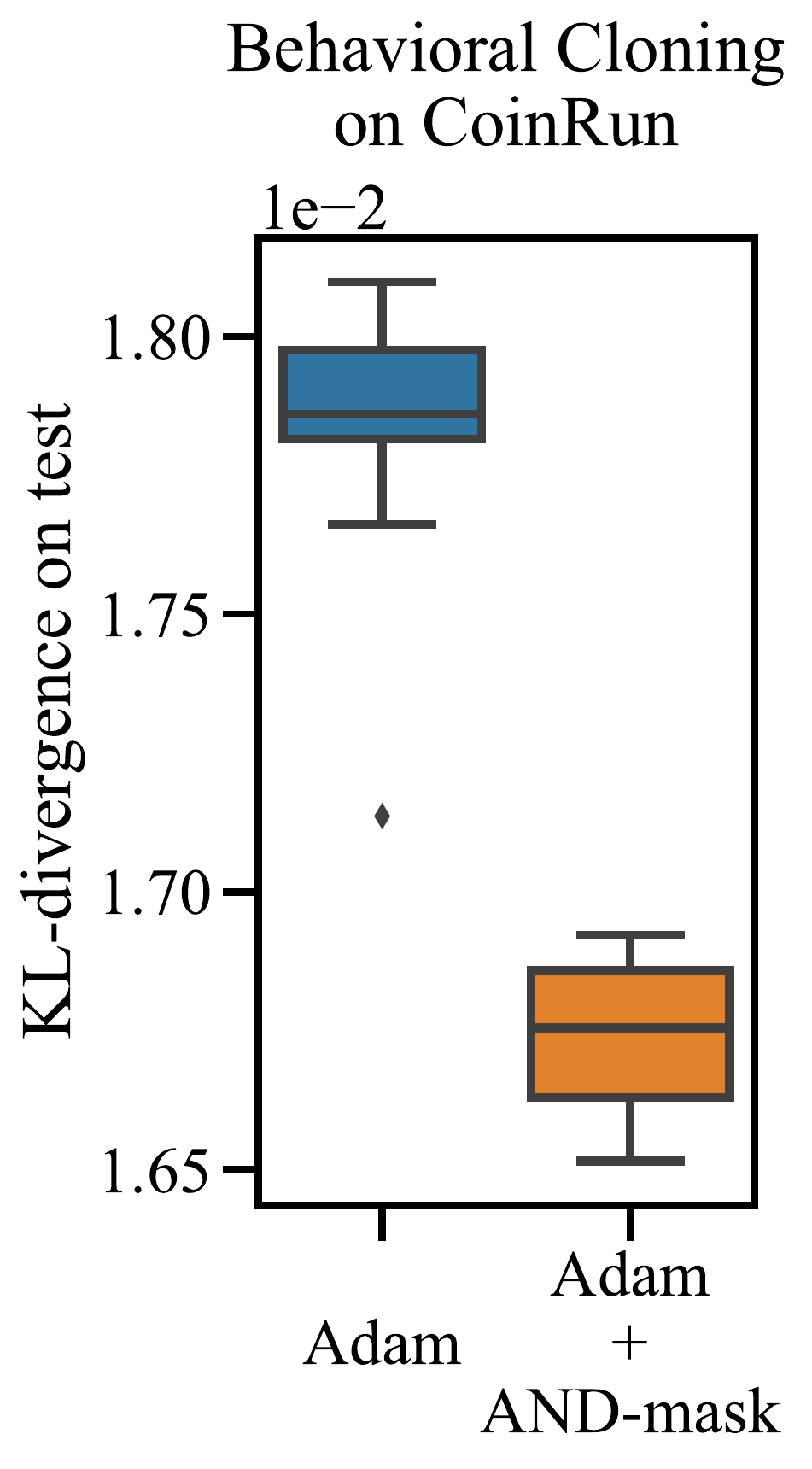}
\end{wrapfigure}
Despite the theoretical computational efficiency of computing the AND-mask as presented in Section \ref{sec:logical_and} (i.e., linear time and memory in the size of the mini-batch, just like classic SGD), current deep learning frameworks 
like PyTorch \citep{paszke2017automatic} have optimized routines that sum gradients across examples in a mini-batch before it is possible to efficiently compute the AND-mask.
We therefore test the AND-mask in a slightly different way. 
In training, in each iteration we sample a batch of data from a randomly chosen level out of the 64 available (and cycle through them all once per epoch).
We then apply the AND-mask `temporally', only allowing gradients that are consistent across time (and therefore across levels).
See Algorithm \ref{alg:and-adam-temporal} in appendix~\ref{app:coinrun} for a detailed description of this alternative formulation of the AND-mask.
The figure %
shows the minimum test loss for the 10 best runs, supporting the hypothesis that the AND-mask helps identify invariant mechanisms across different levels.

\vspace{-1mm}
\section{Related Work}
\label{sec:rel_wrk}
\vspace{-2mm}

\textit{Generalization and covariate shift.} 
The classic formulation of statistical learning theory~\citep{vapnik95} concerns learning from independent and identically distributed samples. The case where the distribution of the covariates at test time differs from the one observed during training is termed
\emph{covariate shift}~\citep{sugiyama2007covariate, quionero2009dataset, sugiyama2012machine}. Standard solutions involve re-weighting of the training examples, but require the additional assumption of overlapping supports for train and test distributions.\\
\textit{Causal models and invariances.} 
A fundamental motivation for our work comes from causality~\citep{Pearl2009, PetJanSch17}, based on the idea that statistical dependencies are epiphenomena of an underlying causal model. The causal description identifies stable elements -- e.g.\ physical \emph{mechanisms} -- connecting causes and effects, which are expected to remain \emph{invariant} under interventions or changing external conditions~\citep{Haavelmo43, Schoelkopf2012}). This motivates our notion of invariant mechanisms, and inspired related notions which have been proposed for robust regression~\citep{rojas2018invariant,heinze2018invariant,arjovsky2019invariant}.
We discuss these works in more detail in appendix~\ref{app:causality}.\\
\textit{Domain generalization.} ILC can be used in a setting of domain generalization \citep{muandet2013domain}, but it is not limited to it: as demonstrated in the experiments in Section \ref{sec:exp_cifar10}, the AND-mask can be applied even if domain labels are not available. In contrast, by treating every example as a single domain, methods relying on domain classifiers (like DANN \cite{ganin2016domain} or \citet{balaji2018metareg}) would require as many output units as there are training examples (i.e. 50'000 for CIFAR-10).\\
\textit{Gradient agreement.} Looking at gradient agreement to learn meaningful representations in neural networks has been explored in~\citep{du2018adapting,eshratifar2018gradient,fort2019stiffness,Zhang2019Learning}. These approaches mainly rely on a measure of cosine similarity between gradients, which we did not consider here for two main reasons: \textit{(i)} It is a `global' property of the gradients, and it would not allow us to extract precise information about different patterns in the network; \textit{(ii)} It is unclear how to extend it beyond pairs of vectors, and for pairwise interactions its computational cost scales quadratic in the number of examples used.

\vspace{-2mm}
\section{Conclusions}
\vspace{-1mm}
Generalizing out of distribution is one of the most significant open challenges in machine learning, and relying on invariances across environments or examples may be key in certain contexts.
In this paper we analyzed how neural networks trained by averaging gradients across examples might converge to solutions that ignore the invariances, especially if these are harder to learn than spurious patterns.
We argued that if learning signals are collected \textit{on one example at the time} --- as it is the case for gradients, e.g., computed with backpropagation --- the way these signals are aggregated can play a significant role in the patterns that will ultimately be expressed: Averaging gradients in particular can be too permissive, acting as a \textit{logical OR} of a collection of distinct patterns, and lead to a `patchwork' solution.
We introduced and formalized the concept of Invariant Learning Consistency, and showed how to learn invariances even in the face of alternative explanations that --- although spurious --- fulfill most characteristics of a good solution.
The AND-mask is but one of multiple possible ways to improve consistency, and it is unlikely to be a practical algorithm for all applications. %
However, we believe this should not distract from the general idea which we are trying to put forward --- namely, that it is worthwhile to study learning of 
explanations that are \textit{hard to vary}, with the longer term goal of advancing our understanding of learning, memorization and generalization.

\newpage

\section*{Acknowledgments}
We wish to thank Sebastian Gomez, Luca Biggio, Julius von Kügelgen, Paolo Penna, Ioannis Anagno, Ricards Marcinkevics, Sidak Pal Singh, Damien Teney for feedback on the manuscript, and thank Nando de Freitas for fruitful discussions in the early stage of this project.
We also thank the Max Planck ETH Center for Learning Systems for supporting Giambattista Parascandolo, and the International Max Planck Research School for Intelligent Systems for supporting Alexander Neitz. 

\bibliography{bibliography}

\newpage

\appendix
 \section{Appendix to Section \ref{sec:theory}}

\subsection{A classic example of a patchwork solution}
\label{app:patchwork}
Consider a neural network with one hidden layer consisting of two neurons and sigmoidal activations:
\begin{equation}
    f_\theta(x) = \theta_5\sigma(\theta_1 x +\theta_2)+\theta_6\sigma(\theta_3 x +\theta_4),\quad \sigma(z) := 1/(1+e^{-z}).
    \label{eq:nn_ex}
\end{equation}
We want to learn the continuous function $f^*:[0,1]\to[0,2]$ defined as
$$f^*(x)=\begin{cases} 0 & x\in[0,0.4);\\
10(x-0.4) & x\in[0.4,0.5);\\
1 & x\in[0.5,0.7);\\
10(x-0.7)+1 & x\in[0.7,0.8);\\2 & x\in[0.8,1].\end{cases}$$
To perform this task, we have access to (noiseless) data from two environments: $$A:\{(x,f(x))\ | \ x\in[0,0.5)\},\quad B:\{(x,f(x))\ | \ x\in[0.5,1]\}.$$
There is a simple \textit{constructive} way, provided by the universal function approximation theorem~\cite{cybenko1989approximation} to fit this function\footnote{For a graphical description, the reader can check~\url{http://neuralnetworksanddeeplearning.com/chap4.html}} using $f_\theta$ up to an arbitrarily small mean squared error $\Ls_{A+B}(\theta^*)$. Leaving out the details of such a construction~(\cite{cybenko1989approximation} for details), the reader can check on the left panel of Figure~\ref{fig:nn_counterexample} that $\theta^* = (100, -50, 100, -75, 1, 1)$ provides a good fit for \textit{both environments} A and B --- both $\Ls_A(\theta^*)$ and $\Ls_B(\theta^*)$ are small.

\begin{figure}[ht]
\centering
\begin{subfigure}{0.48\textwidth}\centering\includegraphics[width=\columnwidth]{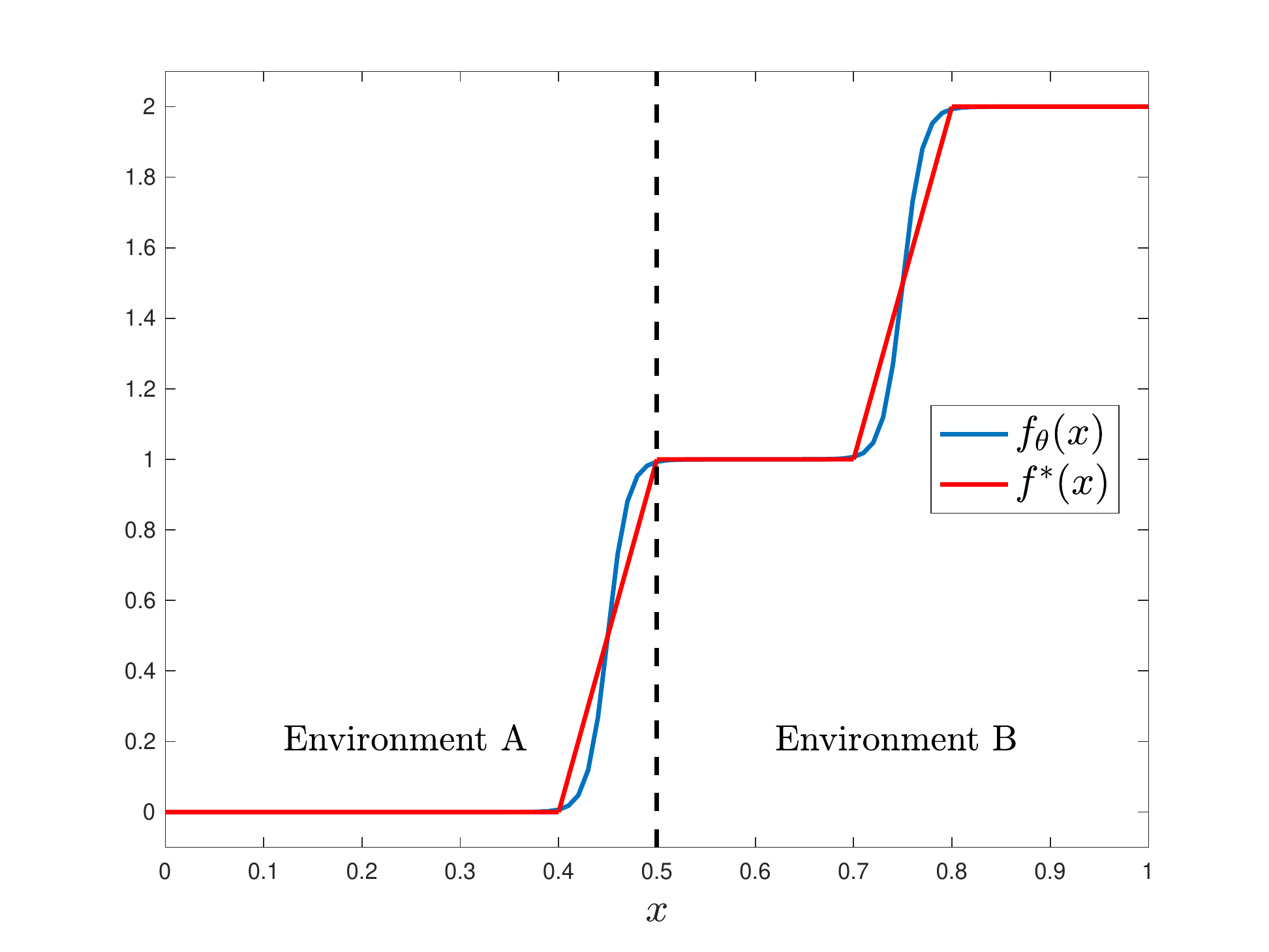}
$\theta = \theta^* = (100, -45, 100, -75, 1, 1)$
\end{subfigure}
\begin{subfigure}{0.48\textwidth}\centering\includegraphics[width=\columnwidth]{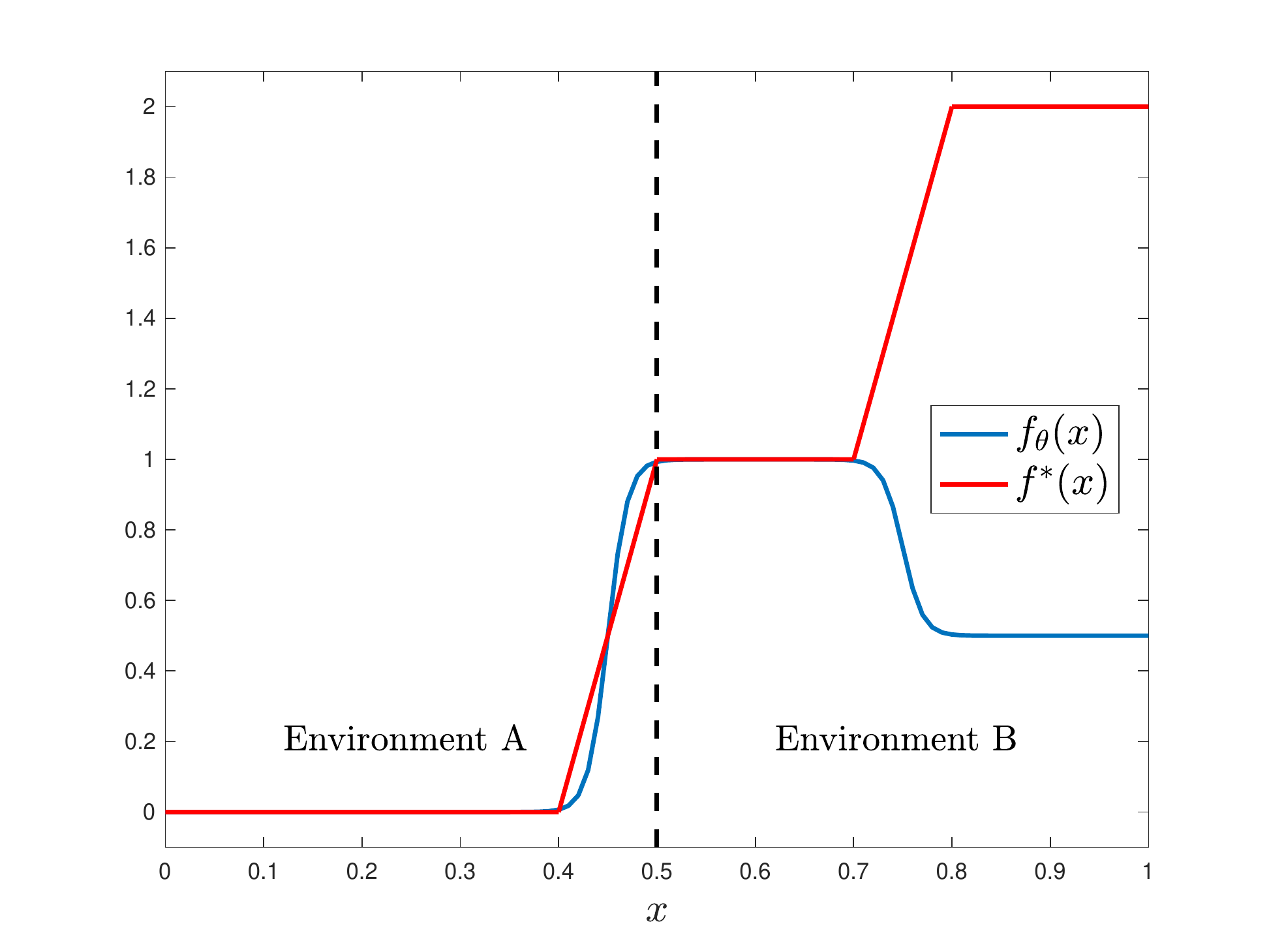}
$\theta = \tilde\theta^* = (100, -45, 100, -75, 1, -0.5)$
\end{subfigure}
\caption{Performance of the neural network in Equation~\ref{eq:nn_ex} for two different parameters. Any reasonable modification on $\theta_6$~(say $\pm 1$) leaves the performance on environment A unchanged, while the performance on environment B quickly degrades.}
\label{fig:nn_counterexample}
\end{figure}

However, it is easy to realize that $\theta^*$ --- while being a solution which can be returned by gradient descent using the pooled data A+B --- is not \textit{consistent}~(formal definition given in the main paper in Section~\ref{sec:theory}). Indeed, it is possible to modify $\tilde\theta^*$ such that the loss in environment A remains almost unchanged, while the loss in environment B gets larger. In particular, on the right panel of Figure~\ref{fig:nn_counterexample}, we show that $\tilde\theta^* = (100, -50, 100, -75, 1, -0.5)$ is such that $\Ls_A(\theta^*) \le \Ls_A(\tilde\theta^*) + \epsilon$ (with $\epsilon$ very small) but $\Ls_B(\theta^*)\ll \Ls_B(\tilde\theta^*)$. According to our definition in Equation~\ref{eq:consistency} (see main paper), we have $\mathcal{I}^\epsilon(\theta^*) \le|\Ls_B(\theta^*)-\Ls_B(\tilde\theta^*)|$ --- that is a large number (low consistency).

\begin{remark}[Connection to out of distribution generalization]
The main point of this analysis was to show an example of where our measure of \textit{consistency} behaves according to expectations:
A typical implementation of the universal approximation theorem --- which one would \textit{not} expect to generalize out of distribution, due to its \textit{`patchwork'} behavior --- leads indeed to a very low consistency score.

\end{remark}

\subsection{Section \ref{sec:mean_landscapes}: Consistency as arithmetic/geometric mean of landscapes}
\label{app:arithm_geom}

\paragraph{Geometric mean of matrices. }Given an $n$-tuple of $d\times d$ positive definite matrices $(A_j)_{j=1}^n$, the geometric (Karcher) mean~\cite{ando2004geometric} is the unique positive definite solution $X$ to the equation $\sum_{i = 1}^m \log(A^{-1}_i X)=0$, where $\log$ is the matrix logarithm. This matrix average has many desirable properties, which make it relevant to signal processing and medical imaging. The Karcher mean can also be written as $\argmin_{X\in\mathcal{S}^{++}(d)} f(X)=\frac{1}{2m}\sum_{i=1}^m d(A_i,X)^2$, where $d$ is the Riemannian distance in the manifold of SPD matrices $\mathcal{S}^{++}(d)$.

\begin{figure}[ht]
    \centering
	\includegraphics[width = 0.8\textwidth]{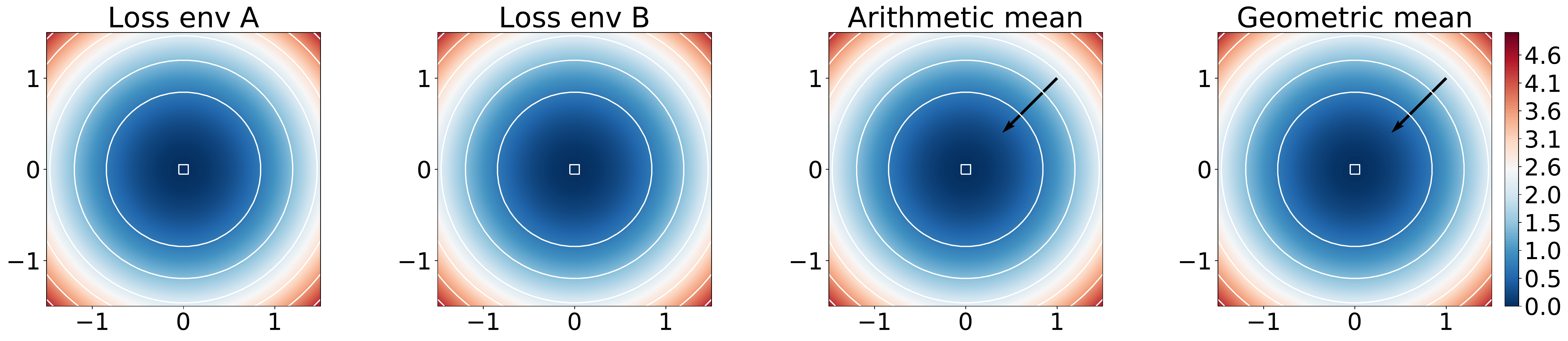}
	\includegraphics[width = 0.8\textwidth]{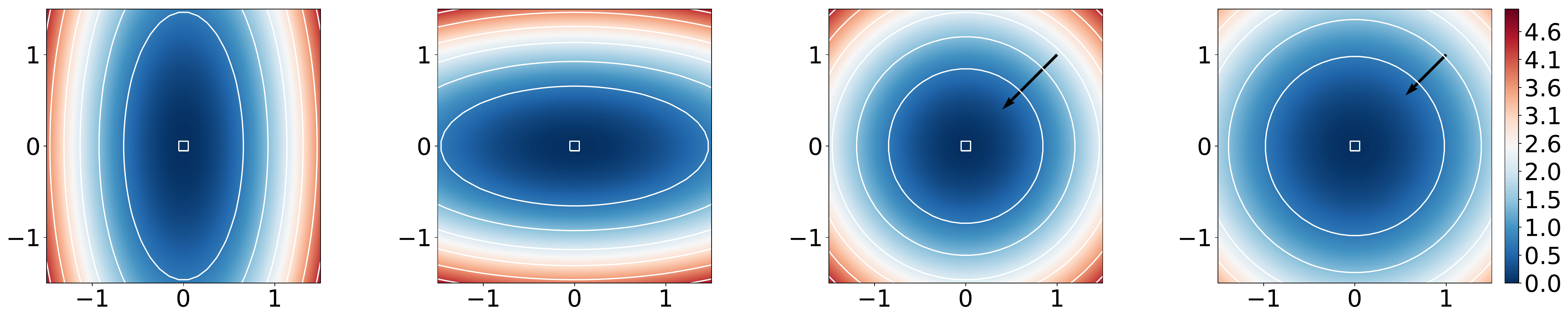}
	\includegraphics[width = 0.8\textwidth]{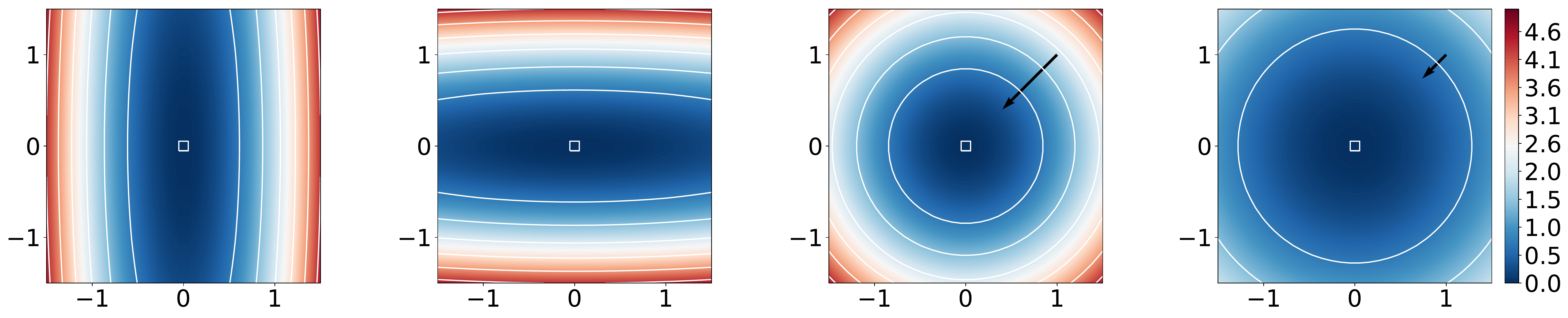}
	\includegraphics[width = 0.8\textwidth]{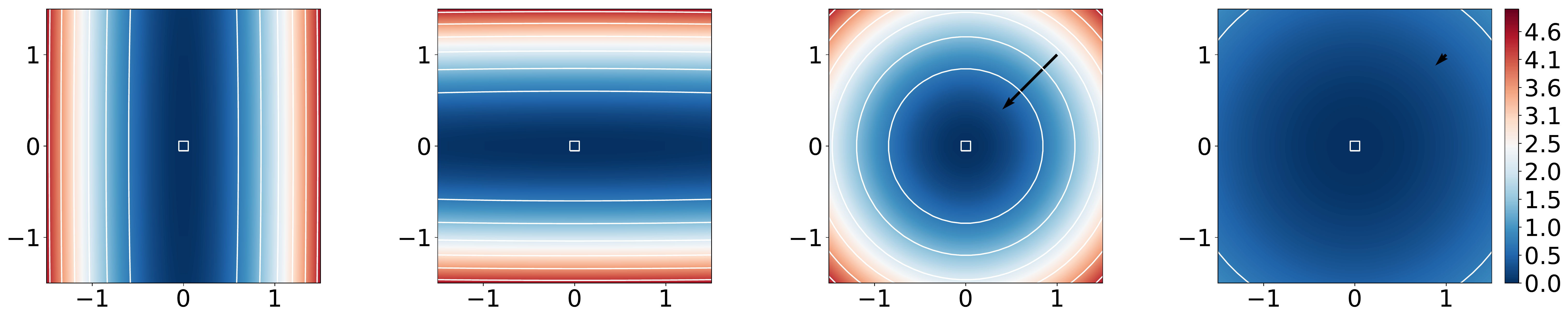}
    \caption{\small While the arithmetic mean of the two loss surfaces on the left is identical in all three cases (third column), the geometric mean has weaker and weaker gradients (black arrow) the more inconsistent the two loss surfaces become.}
\label{fig:hessian_scales}
\end{figure}

\paragraph{Link between consistency and geometric means.} Here we show how the consistency score introduced in Equation~\ref{eq:consistency} can be linked~(in a simplified setting) to a comparison between the arithmetic and geometric means of the Hessians approximating the landscapes of two separate environments $A$ and $B$.

At the local minimizer $\theta^*=0$, we assume that $\Ls_A=\Ls_B=0$ and consider the local quadratic approximations $\Ls_A(\theta)=\frac{1}{2}\theta^\top H_A \theta$ and $\Ls_B(\theta)=\frac{1}{2}\theta^\top H_B \theta$. Here, we make the additional simplifying assumption that $H_A$ and $H_B$ are diagonal~(or, more broadly, co-diagonalizable): $H_A = \text{diag}(\lambda^A_1,\cdots,\lambda^A_n)$, $H_B = \text{diag}(\lambda^B_1,\cdots,\lambda^B_n)$, with $\lambda^A_i\ge0$ and $\lambda^B_i\ge0$ for all $i=1,\dots,n$. The \textit{arithmetic} and \textit{geometric} means (noted as $H_{A+B}$ and $H_{A\wedge B}$) of these matrices are defined in this simplified setting as follows:
$$H_{A+B} = \text{diag}\left(\frac{1}{2}(\lambda_1^A + \lambda^B_1),\cdots,\frac{1}{2}(\lambda_n^A + \lambda^B_n)\right),\quad H_{A\wedge B} = \text{diag}\left(\sqrt{\lambda_1^A \lambda^B_1},\cdots,\sqrt{\lambda_n^A \lambda^B_n}\right).$$
As motivated in the main paper and in Figure~\ref{fig:karcher2}, one can link the consistency of two landscapes to a comparison between the geometric and arithmetic means of the corresponding Hessians. 
\begin{figure}[ht]
  \centering
\includegraphics[width=0.6\linewidth]{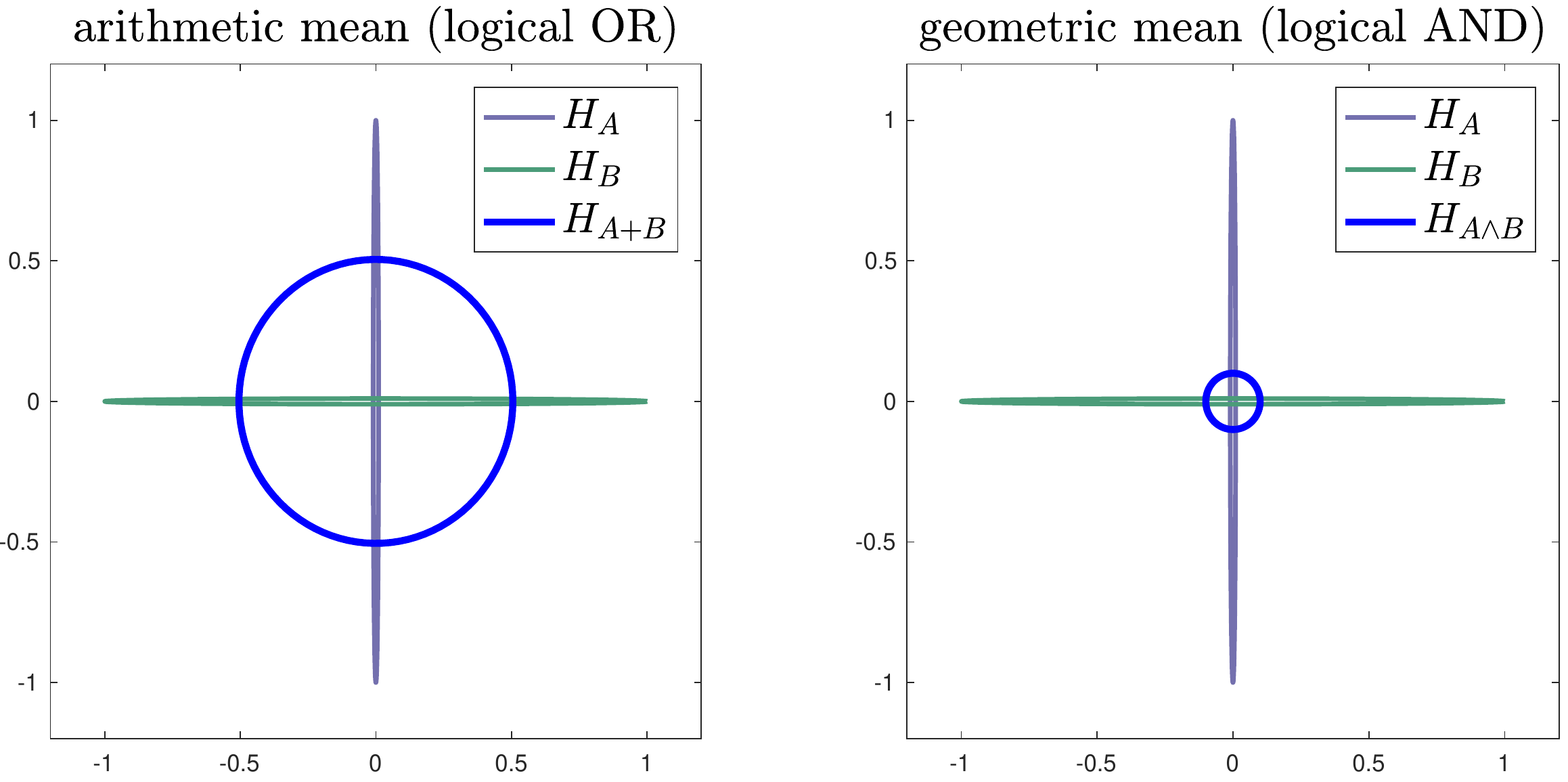}
\caption{\small Plotted are contour lines $\theta^\top H^{-1}\theta=1$ for $H_A =\text{diag}(0.01,1)$ and $H_B = \text{diag}(1,0.01)$. It is convenient to provide this visualization because it is linked to the matrix determinant: $\text{Vol}(\{\theta^\top H^{-1} \theta=1\}) = \pi \sqrt{\det(H)}$. The geometric average retains the volume of the original ellipses, while the volume of $H_{A+B}$ is 25 times bigger. This magnification indicates that landscape $A$ is not consistent with landscape $B$.}
\label{fig:karcher2}
\end{figure}
\begin{proposition}
In the setting we just described, the consistency score in Equation~\ref{eq:consistency} can be estimated as follows:
$$\mathcal{I}^\epsilon(\theta^*)\le2\epsilon\left(\frac{\text{det}(H_{A+B})}{\text{det}(H_{A\wedge B})}\right)^2.$$
\end{proposition}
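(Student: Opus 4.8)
The plan is to exploit the quadratic structure to make the neighborhoods $N^\epsilon_{e,\theta^*}$ and the inner maximum in the inconsistency score $\mathcal{I}^\epsilon(\theta^*)$ completely explicit, and only at the very end convert eigenvalue ratios into the determinant ratio. First I would identify $N^\epsilon_{A,\theta^*}$ and $N^\epsilon_{B,\theta^*}$. Since $\theta^*=0$, $\Ls_A(\theta^*)=\Ls_B(\theta^*)=0$, and every $\lambda^A_i,\lambda^B_i>0$, each sublevel set $\{\theta:|\Ls_e(\theta)|\le\epsilon\}=\{\theta:\tfrac12\sum_i\lambda^e_i\theta_i^2\le\epsilon\}$ is a compact ellipsoid that is convex, hence path-connected, and contains the origin. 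Therefore it already equals its own path-connected component through $\theta^*$, so $N^\epsilon_{e,\theta^*}=\{\theta:\sum_i\lambda^e_i\theta_i^2\le2\epsilon\}$ with no further work.

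Next I would evaluate the inner maximum for each ordered pair. For the pair $(A,B)$ we must maximize $|\Ls_B(\theta)-\Ls_A(\theta)|=\tfrac12\big|\sum_i(\lambda^B_i-\lambda^A_i)\theta_i^2\big|$ over the ellipsoid $\sum_i\lambda^A_i\theta_i^2\le2\epsilon$. The natural move is the substitution $u_i:=\lambda^A_i\theta_i^2\ge0$, which turns the problem into maximizing the \emph{linear} functional $\tfrac12\sum_i\big(\tfrac{\lambda^B_i}{\lambda^A_i}-1\big)u_i$ over the polytope $\{u\ge0:\sum_i u_i\le2\epsilon\}$. A linear functional on this scaled simplex attains its extreme at a vertex ($0$ or $2\epsilon e_j$), so the maximum of the absolute value is $\epsilon\max_i|\lambda^B_i/\lambda^A_i-1|$; the pair $(B,A)$ gives $\epsilon\max_i|\lambda^A_i/\lambda^B_i-1|$ by symmetry. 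Writing $s_i:=\max(\lambda^A_i/\lambda^B_i,\lambda^B_i/\lambda^A_i)\ge1$ and taking the outer maximum over the two pairs collapses everything to the clean identity $\mathcal{I}^\epsilon(\theta^*)=\epsilon\,\max_i(s_i-1)$.

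The last step, and the one I expect to be the real obstacle, is to dominate this worst-coordinate quantity by the squared determinant ratio. Here I would use that the ratio factorizes coordinatewise, $\frac{\det(H_{A+B})}{\det(H_{A\wedge B})}=\prod_i\frac{\lambda^A_i+\lambda^B_i}{2\sqrt{\lambda^A_i\lambda^B_i}}=\prod_i\frac{1+s_i}{2\sqrt{s_i}}$, and that by AM--GM each factor $g(s_i):=\frac{1+s_i}{2\sqrt{s_i}}$ is at least $1$. If $i^\star$ realizes $\max_i s_i$, then replacing all factors except $i^\star$ by the lower bound $1$ gives $\big(\det(H_{A+B})/\det(H_{A\wedge B})\big)^2\ge g(s_{i^\star})^2$, so it suffices to establish the scalar inequality $s-1\le 2\,g(s)^2=\frac{(1+s)^2}{2s}$. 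This reduction to a single-variable estimate is clean, but controlling it is delicate: the left side grows linearly in $s$ while $g(s)^2$ grows only like $s/4$, so the comparison is precisely where the constant and the regime of admissible curvature ratios genuinely matter. I therefore expect this scalar bound, together with the passage from the single worst factor to the full product, to be the crux of the whole argument.
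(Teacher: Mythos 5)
Your first two steps are correct and are in fact \emph{sharper} than the paper's own computation: you keep the full objective $|\Ls_B(\theta)-\Ls_A(\theta)|$ from Equation~\ref{eq:consistency} and obtain the exact value $\mathcal{I}^\epsilon(\theta^*)=\epsilon\max_i(s_i-1)$, whereas the paper's proof quietly replaces the objective by $\Ls_B(\theta)$ alone and works with $\epsilon\max_i s_i$; your simplex/vertex argument for the inner maximum is a valid substitute for the paper's change of variables $\tilde\theta_i=\theta_i\sqrt{\lambda_i^A/2}$, and your ``worst factor $\le$ full product'' step (each $g(s_j)\ge1$ by AM--GM) is exactly the paper's device. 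The genuine gap is the scalar inequality you yourself flag as the crux: $s-1\le 2g(s)^2=\frac{(1+s)^2}{2s}$ is \emph{false} in general --- it is equivalent to $s^2-4s-1\le 0$, i.e.\ to $s\le 2+\sqrt{5}$ --- so the proof cannot be completed in this form. Worse, this is not a defect of your reduction but of the statement itself: take $n=1$, $\lambda^A_1=1$, $\lambda^B_1=s$; your exact evaluation gives $\mathcal{I}^\epsilon(\theta^*)=\epsilon(s-1)$, while the claimed bound equals $\epsilon\frac{(1+s)^2}{2s}\approx\epsilon s/2$, so the proposition with the constant $2$ fails for any $s>2+\sqrt{5}$.

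What the paper's proof actually establishes is the bound with constant $4$: it uses $\max\{r,\tfrac1r\}\le r+\tfrac1r=\frac{(\lambda^A)^2+(\lambda^B)^2}{\lambda^A\lambda^B}\le\frac{(\lambda^A+\lambda^B)^2}{\lambda^A\lambda^B}=4g(s)^2$, followed by the same max-$\le$-product step; tracked carefully this yields $\mathcal{I}^\epsilon(\theta^*)\le 4\epsilon\left(\frac{\det(H_{A+B})}{\det(H_{A\wedge B})}\right)^2$, and the stated factor $2\epsilon$ appears only because the write-up mishandles constants (it writes $\sqrt{\mathcal{I}^\epsilon(\theta^*)}\le\epsilon\max_i\frac{\lambda_i^A+\lambda_i^B}{\sqrt{\lambda_i^A\lambda_i^B}}$ where $\sqrt{\epsilon}$ should stand, and the final squaring is never carried out consistently). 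So the fix for your argument is simply to replace the false scalar bound by the true one, $s-1\le s\le s+2+\tfrac1s=4g(s)^2$: everything else in your proof then goes through verbatim and gives $\mathcal{I}^\epsilon(\theta^*)\le 4\epsilon\left(\frac{\det(H_{A+B})}{\det(H_{A\wedge B})}\right)^2$, which is the inequality the paper's own chain proves and, given the one-dimensional counterexample above, essentially the best constant one can state in this form.
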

Before showing the proof, we note that the proposition gives a \textit{lower bound} on the consistency. That is, it provides a \textit{pessimistic} estimate. Yet, as we motivated, this estimate has a nice geometric interpretation. However, as we outline in a remark after the proof, this estimate is tight in two important limit cases.
\begin{proof}
In this setting, Equation~\ref{eq:consistency} gives
$$\mathcal{I}^\epsilon(\theta^*) := \max\left\{\max_{\Ls_A(\theta)\le\epsilon} \Ls_{B}(\theta), \max_{\Ls_B(\theta)\le\epsilon} \Ls_{A}(\theta)\right\}.$$
Recall that
$$\Ls_A(\theta)=\frac{1}{2}\theta^\top H_A \theta= \frac{1}{2}\sum_i\lambda^A_i \theta_i^2.$$
Hence, this is a simple quadratic program with quadratic constraints, and
$$\max_{\Ls_A(\theta)\le\epsilon} \Ls_{B}(\theta)= \max_{\frac{1}{2}\sum_i\lambda^A_i \theta_i^2\le\epsilon} \frac{1}{2}\sum_i\lambda^B_i \theta_i^2.$$
Further, we can change variables and introduce $\tilde\theta_i = \theta_i \sqrt{\lambda^A_i/2}$. The problem gets even simpler:
$$\max_{\Ls_A(\theta)\le\epsilon} \Ls_{B}(\theta)= \max_{\|\tilde\theta\|^2\le\epsilon} \sum_i\frac{\lambda^B_i}{\lambda^A_i} \tilde\theta_i^2 = \epsilon\cdot\max_i \frac{\lambda_i^B}{\lambda_i^A}.$$
All in all, we get
\begin{align*}
    \mathcal{I}^\epsilon(\theta^*)
    &=\epsilon\max\left\{\max_i \frac{\lambda_i^B}{\lambda_i^A}, \max_i \frac{\lambda_i^A}{\lambda_i^B}\right\}\\
    &=\epsilon\cdot\max_i\max\left\{\frac{\lambda_i^B}{\lambda_i^A},\frac{\lambda_i^A}{\lambda_i^B}\right\}\\
    &\le\epsilon\cdot\max_i\left(\frac{\lambda_i^B}{\lambda_i^A}+\frac{\lambda_i^A}{\lambda_i^B}\right)\\&= \epsilon\cdot \max_i\left\{\frac{(\lambda_i^B)^2 + (\lambda_i^A)^2}{\lambda_i^B \lambda_i^A}\right\}\\
    &\le \epsilon\cdot \max_i\left\{\frac{(\lambda_i^B +\lambda_i^A)^2}{\lambda_i^B \lambda_i^A}\right\}.
\end{align*}
This means $$\sqrt{\mathcal{I}^\epsilon(\theta^*)}\le \epsilon\max_i \frac{\lambda_i^B +\lambda^A_i}{\sqrt{\lambda_i^B\lambda_i^A}}\le 2\epsilon\frac{\prod_i(\lambda_i^B +\lambda^A_i)/2}{\prod_i\sqrt{\lambda_i^B\lambda_i^A}} = 2\epsilon\frac{\text{det}(H_{A+B})}{\text{det}(H_{A\wedge B})},$$
where the first inequality comes from the monotonicity of the square root function, and the second inequality comes from the fact that (i) the geometric mean is always smaller or equal than the arithmetic mean and (ii) for any sequence of numbers $\alpha_i>1$, $\max_i \alpha_i\le\prod_i \alpha_i$. 
\end{proof}

\begin{remark}[Sanity check]
There are two important cases where we can test the bound above. First, if $H_A = H_B$, then $\mathcal{I}^\epsilon(\theta^*) = \epsilon$, and the bound returns $\mathcal{I}^\epsilon(\theta^*) \le 2\epsilon$, since the geometric and arithmetic mean are the same. Next, say $\lambda^A_i=0$ but $\lambda^B_i>0$; then, both the bound and the inconsistency score are $\infty$~(highest possible inconsistency).
\end{remark}

\subsection{Proof of Proposition \ref{prop:convergence_rate_masked_GD}}
\label{app:proof_and}
In this appendix section we consider the AND-masked GD algorithm, introduced at the end of Section~\ref{sec:theory}. We recall that the masked gradients at iteration $k$ are $m_t(\theta^k)\odot\nabla \mathcal{L}(\theta^k)$, where $m_t(\theta^k)$ vanishes for any component where there are less than $t\in\{d/2+1,\dots,d\}$ agreeing gradient signs across environments, and is equal to one otherwise. In a full-batch setting, the algorithm is
\begin{equation*}
    \tag{AND-masked GD}
    \theta^{k+1} = \theta^k -\eta \  m_t(\theta^k)\odot\nabla \mathcal{L}(\theta^k) ,
\end{equation*}
where $\eta>0$ is the learning rate.
\rateMGD*
\begin{proof}
Thanks to the component-wise $L$-smoothness and using a Taylor expansion around $\theta^i$ we have
\begin{align*}
    \Ls(\theta^{i+1}) &\le\Ls(\theta^{i})-\eta \langle \nabla \Ls(\theta^i),m_t(\theta^i)\odot\nabla \mathcal{L}(\theta^i)\rangle + \frac{L\eta^2}{2}\|m_t(\theta^i)\odot\nabla \mathcal{L}(\theta^i)\|^2\\
    &=\Ls(\theta^{i})-\left(\eta-\frac{L\eta^2}{2}\right)\|m_t(\theta^i)\odot\nabla \mathcal{L}(\theta^i)\|^2.\\
\end{align*}
If we seek $\eta-L\eta^2/2\ge\eta/2$, then $\eta\le\frac{1}{L}$, as we assumed in the proposition statement. Therefore, $\Ls(\theta^{i+1})\le\Ls(\theta^{i})-(\eta/2)\|m_t(\theta^i)\odot\nabla \mathcal{L}(\theta^i)\|^2$, for all $i\ge 0$. Summing over $i$ from $0$ to a desired iteration $k$, we get
$$\sum_{i=0}^{k-1}(\eta/2)\|m_t(\theta^i)\odot\nabla \mathcal{L}(\theta^i)\|^2\le\Ls(\theta^{0})-\Ls(\theta^{k})\le \Ls(\theta^{0}).$$
Therefore,
\vspace{-2mm}
$$\min_{i=0,\dots,k} \|m_t(\theta^i)\odot\nabla \mathcal{L}(\theta^i)\|^2 \le\frac{1}{k}\sum_{i=0}^{k-1}(\eta/2)\|m_t(\theta^i)\odot\nabla \mathcal{L}(\theta^i)\|^2\le \frac{2\Ls(\theta^{0})}{\eta k}.$$
Hence, there exist an iteration $i^*\in\{0,\dots,k\}$ such that $\|m_t(\theta^{i^*})\odot\nabla \mathcal{L}(\theta^{i^*})\|^2\le\mathcal{O}(1/k)$. 
\end{proof}

\subsection{Proof of Proposition \ref{prop:rate_random_grads}}
\label{app:prop2}
Here we fix parameters $\theta\in\R^n$ and assume gradients $\nabla \Ls_{e}(\theta)\in\R^n$ coming from environments $e\in\EE$ are drawn independently from a multivariate Gaussian with zero mean and $\sigma^2 I$ covariance. We want to show that, in this random setting, the AND-mask introduced in Section~\ref{sec:logical_and} decreases the magnitude of the gradient step.
\rateRandomGrads*
\begin{proof}
Let us drop the argument $\theta$ for ease of notation. First, let us consider $\nabla\Ls$ (no gradient AND-mask):
$$\E\left\|\frac{1}{d}\sum_{i=1}^d\nabla \Ls_{e_i}\right\|^2 = \frac{1}{d^2} \sum_{i=1}^d\E\|\nabla \Ls_{e_i}\|^2 = \frac{n \sigma^2}{d},$$

where in the first equality we used the fact that the $\nabla \Ls_{e_i}$ are uncorrelated and in the second the fact that $\E[\|\nabla \Ls_{e_i}\|^2]$ is the trace of the covariance of $\nabla \Ls_{e_i}$.

Next, assume we apply the element-wise AND-mask $m_t$ to the gradients, which puts to zero the components (dimensions) where there are less than $t\in\{d/2,\dots,d\}$ equal signs. Since Gaussians are symmetric around zero, the probability of having \textit{exactly} $u$ positive $j$-th gradient component among $d$ environments is $Pr(p_{j}=u)=\left(\frac{1}{2}\right)^d \binom{d}{u}$.
Hence, the probability to keep the $j$-th gradient direction (considering also negative consistency) is
\begin{align}
    \Pr[[m_t]_j=1]&=\sum_{u=t}^d\Pr(p_{j}=u)+\sum_{u=0}^{d-t}\Pr(p_{j}=u)\nonumber\\
    &=\left(\frac{1}{2}\right)^d \sum_{k=t}^d\binom{d}{k} + \left(\frac{1}{2}\right)^{d} \sum_{k=0}^{d-t}\binom{d}{k}\nonumber\\
    &= 2\left(\frac{1}{2}\right)^{d} \sum_{k=t}^d\binom{d}{k}.
    \label{eq:pt}
\end{align}
We would now like to compute $\E\left\|m_t\odot\left(\frac{1}{d}\sum_{i=1}^d \nabla\Ls_{e_i}\right)\right\|^2$. The difficulty lies in the fact that the event $m_t =1$ makes gradients conditionally \textit{dependent}. Indeed, conditioning on both $m_t =1$ and $[\nabla \Ls_e]_j>0$ changes the distribution of $[\nabla \Ls_{e'}]_j$: this gradient entry is going to be more likely to be positive or negative, depending on the value of $[\nabla \Ls_e]_j$ and on the details of the gradient mask. To solve the issue, we our strategy is to reduce the discussion (without loss in generality and with no additional assumption) to the case where gradient entries have all the same sign and hence conditional independence is restored.

We consider the following writing for the quantity we are interested in:
\begin{align*}
    \E\left\|m_t\odot\left(\frac{1}{d}\sum_{i=1}^d \nabla \Ls_{e_i}\right)\right\|^2 &= \sum_{j=1}^n\E\left[[m_t]_j\left(\frac{1}{d}\sum_{i=1}^d [\nabla \Ls_{e_i}]_j\right)^2\right]\\
    &=\sum_{j=1}^n\sum_{\hat p_j = 0}^d\E\left[[m_t]_j\left(\frac{1}{d}\sum_{i=1}^d [\nabla \Ls_{e_i}]_j\right)^2 \bigg| p_j = \hat p_j\right] \Pr[p_j = \hat p_j]\\
    &=\sum_{j=1}^n\sum_{\hat p_j =0}^{(d-t)}\sum_{\hat p_j =t}^d\E\left[\left(\frac{1}{d}\sum_{i=1}^d [\nabla \Ls_{e_i}]_j\right)^2 \bigg| p_j = \hat p_j\right] \Pr[p_j = \hat p_j]\\
    &=2\sum_{j=1}^n\sum_{\hat p_j =t}^d\E\left[\left(\frac{1}{d}\sum_{i=1}^d [\nabla \Ls_{e_i}]_j\right)^2 \bigg| p_j = \hat p_j\right] \left(\frac{1}{2}\right)^d \binom{d}{\hat p_j},\\
\end{align*}
where we used the definition of $2$-norm, the law of total expectation, and the symmetry of the problem with respect to positive and negative numbers. Finally, since the gradient components within the same environment are conditionally independent, for any $j\in\{1,\dots,n\}$ we can write
$$\E\left\|m_t\odot\left(\frac{1}{d}\sum_{i=1}^d \nabla \Ls_{e_i}\right)\right\|^2 = 2n \sum_{\hat p_j =t}^d\E\left[\left(\frac{1}{d}\sum_{i=1}^d [\nabla \Ls_{e_i}]_j\right)^2 \bigg| p_j = \hat p_j\right] \left(\frac{1}{2}\right)^d \binom{d}{\hat p_j}.$$

Finally, we note that the following bound holds:
\begin{align*}
    \E\left[\left(\frac{1}{d}\sum_{i=1}^d [\nabla \Ls_{e_i}]_j\right)^2 \bigg| p_j = \hat p_j\le d\right]&\le \E\left[\left(\frac{1}{d}\sum_{i=1}^d [\nabla \Ls_{e_i}]_j\right)^2 \bigg| p_j = d\right].
\end{align*}
 Indeed, if \textit{all} environments lead to positive (or, symmetrically, negative) and \textit{non-interacting} gradients in the $j$-th direction, the average will be the biggest in norm. Moreover --- crucially --- conditioned on the event $p_j =d$, gradients coming from different environments are distributed as a positive half-normal distributions. Moreover, they are \textit{conditionally independent}; this because, since they are all positive, the value of a gradient in one environment cannot influence the value of the gradient in another one. We remark that conditional independence on the right-hand side is therefore \textit{not an assumption}, but is intrinsic to the upper bound.
 
 Putting it all together, we have
\begin{align*}
    \E\left\|m_t\odot\left(\frac{1}{d}\sum_{i=1}^d \nabla \Ls_{e_i}\right)\right\|^2 &\le 2 n \sum_{\hat p_j =t}^d\E\left[\left(\frac{1}{d}\sum_{i=1}^d [\nabla \Ls_{e_i}]_j\right)^2 \bigg| p_j = d\right] \left(\frac{1}{2}\right)^d \binom{d}{\hat p_j}\\
    &\le 2 n\sum_{\hat p_j =t}^d \sigma^2 \left(\frac{1}{2}\right)^d \binom{d}{\hat p_j}\\
    &\le \sigma^2 n(d-t)\binom{d}{t} \left(\frac{1}{2}\right)^{d-1},
\end{align*}
where in the second line we bounded the squared average of a sum of half normal distributions: let $\{X_i\}_{i=1}^d$ be a family of uncorrelated positive half-normal distributions derived from a Gaussians with mean zero and variance $\sigma^2$, we have\footnote{\url{https://en.wikipedia.org/wiki/Half-normal_distribution}} that $\E[X_i] = \sigma\sqrt{2/\pi}$ and $\E[X_i^2] = \sigma^2$. Also, $\E[X_i X_j]=\E[X_i]\E[X_j]\le\sigma^2$. Therefore,
$$\E\left[\left(\frac{1}{d}\sum_{i=1}^d X_i\right)^2\right] = \frac{1}{d^2}\sum_{i,j=1}^d\E[X_iX_j] \le\sigma^2.$$

Finally, if we set $r = t/d\in(0.5,1]$, we have\footnote{Theorem 1 in Burić, Tomislav, and Neven Elezović. ``Asymptotic expansions of the binomial coefficients.'' Journal of applied mathematics and computing 46.1-2 (2014): 135-145.}
$$\binom{d}{t}\sim \left(\frac{1}{r^r(1-r)^{1-r}}\right)^d$$
as $d\to\infty$ (discarding all polynomial terms). Hence $\binom{d}{t}$ is of the form $q^d$, with $1\le q<2$. So, the quantity $\sigma^2 n(d-t)\binom{d}{t} \left(\frac{1}{2}\right)^{d-1}$ will be exponentially decreasing at a rate $\mathcal{O}(n/(2-q)^d)$. Notably, if $t = d/2$, then we lose the exponential rate and get back to $\mathcal{O}(n/d)$.
\end{proof}

\newpage
\section{Appendix to Section \ref{sec:reg_reg_reg}}

We used Pytorch \cite{paszke2017automatic} to implement all experiments in this paper.
\ificlrfinal
Our codebase is publicly available at \url{https://github.com/gibipara92/learning-explanations-hard-to-vary}.
\fi
\label{app:synth_data}
\subsection{Section \ref{sec:exp_synthetic}}

\begin{table}[htb]
    \centering
    \caption{Hyperparameter ranges for synthetic data experiments. The regularizers L1 and L2 are never combined; instead, one weight regularization type out of L1, L2 and none is selected and we sample from the respective range afterwards.}
    \label{tab:hyperparam}
    \begin{tabular}{lr}  
    \toprule
    Hyperparameter    & Ranges \\
    \midrule
    No. hidden units      &  $\{ 256, 512 \}$  \\
    No. hidden layers      &  $\{$3, 5$\}$  \\
    Batch-size      &  $\{64, 128, 256\}$  \\
    Optimizer      & $\{$Adam$_{\beta_1=0.9, \beta_2=0.999}$, SGD + momentum$_{0.9}\}$ \\
    Learning rate      & $\{$1e-3, 1e-2, 1e-1$\}$  \\
    Batch-normalization       & $\{$Yes, No$\}$       \\
    Dropout       & $\{$0.0,  0.5$\}$       \\
    L2 regularization &  $\{$1e-5, 1e-4, 1e-3$\}$ \\
    L1 regularization      &  $\{$1e-6, 1e-5, 1e-4$\}$  \\
    \bottomrule
    \end{tabular}    
    \label{tab:hyper_param_baselines}
\end{table}{}

\subsection{Dataset}
\label{app:dataset}
\begin{wrapfigure}{r}{0.2\textwidth}
	\centering
	\vspace{-4.5mm}
	\includegraphics[width=\linewidth]{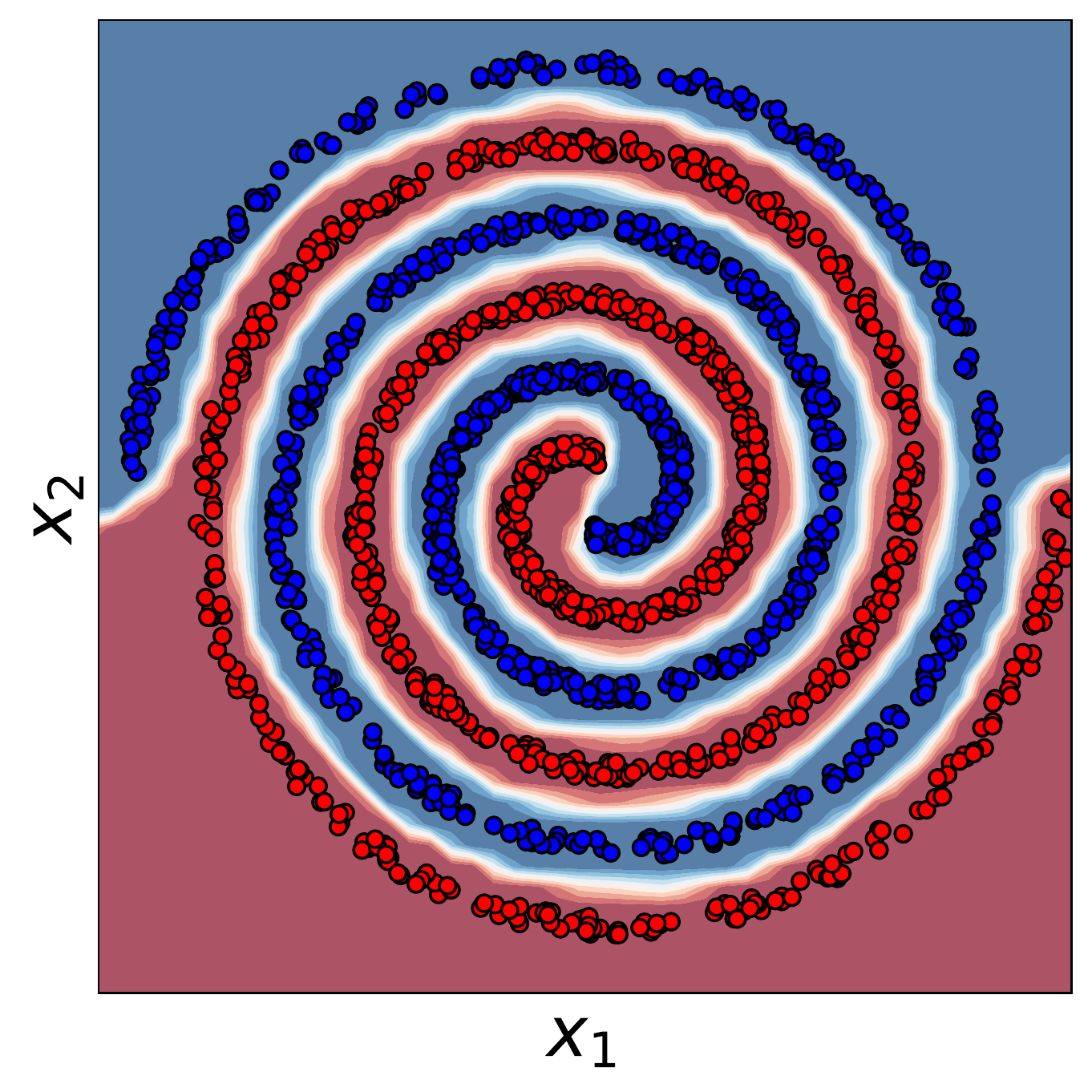}
	\caption{The spirals used as the \textit{mechanism} in the synthetic memorization dataset.}
	\label{fig:spirals}
	\vspace{-5mm}
\end{wrapfigure}
Here we report more technical details about the synthetic dataset described in Section \ref{sec:reg_reg_reg}.
Each example is constructed as follows: we first choose the label randomly to be either $+1$ or $-1$, with equal probability.
The example is a vector with $d_S + d_M$ entries, consisting of the \emph{shortcut} and the \emph{mechanism}.
In our experiments, $d_M = 2$ and $d_S = 32$.  

The Gaussian \textit{shortcuts} are obtained by first sampling one random vector $\mathbf{x}_s \in \mathbb{R}^{d_S}$ per environment.
Its components $x_{s,i}$ are sampled independently from a Normal distribution: $x_{s,i} \sim \mathcal{N}(0, 0.1)$. 
We use $\mathbf{x}_s$ for class 1, and $-\mathbf{x}_s$ for class -1.
In the test set, all shortcut components are sampled i.i.d. from the same Normal distribution.
Effectively, each example of the test set belongs to a different domain.
The \textit{mechanism} is implemented as the two interconnected spirals shown in Figure \ref{fig:spirals} by sampling the radius $r \sim \mathrm{Unif}(0.08, 1.0)$ and then computing the angle as $\alpha = 2\pi n r$ where $n$ is the number of revolutions of the spiral. We add uniform noise in the range $[-0.02, 0.02]$ to the radii afterwards. %

The training dataset consists of 1280 examples per environment and we use $D = 32$ environments unless otherwise mentioned.
The training datasets consists of 2000 examples.

\subsection{Experiment}
We train all networks for $\left\lfloor 3000 / D \right\rfloor$ epochs, dropping the learning rate by a factor 10 halfway through, and again at three-quarters of training.
For computational reason, we stop each trial before completion if the training accuracy exceeds 97\% and the test accuracy is below 60\%.
All networks are MLPs with LeakyReLU activation functions and a cross-entropy loss on the output.
We run a hyperparameter search over the ranges shown in Table \ref{tab:hyper_param_baselines}.
For IRM and the AND-mask, we select the best-performing run and re-run it 50 times with different random seeds.
For DANN and the standard baselines nothing produced results significantly better than chance.

\subsubsection{Standard regularizers and AND-mask}
The networks with the L1, L2, Dropout and Batch-normalization regularizers, have hyperparameters that were randomly selected from Table \ref{tab:hyper_param_baselines}.
For the AND-mask we used the very same ranges.
The regularizers L1 and L2 are never combined; instead, one weight regularization type out of L1, L2 and none is selected and we sample from the respective range afterwards.
The parameters found to work best from the grid search were: agreement threshold of 1, 256 hidden units, 3 hidden layers, batch size 128, Adam with learning rate 1e-2, no batch norm, no dropout, L2-regularization with a coefficient of 1e-4, no L1-regularization. In practice, we often found it helpful to rescale the gradients after masking to compensate for the decreasing overall magnitude. We add the option for gradient rescaling as an additional hyperparameter, as we found it to help in several experiments. It rescales gradient components layer-wise after masking, by multiplying the remaining gradient components by $c$, where $c$ is the ratio of the number of components in that layer over the number of non-masked components in that layer (i.e. the sum of the binary elements in the mask).\footnote{Therefore, $c$ is 1 if the AND-mask has only 1s, and infinite if all components are masked out (which we then keep as 0.)}.
We speculate that for very large layers, a less extreme normalization scheme or the additional use of gradient clipping might be appropriate.

\subsubsection{Domain Adversarial Neural Networks}
\label{app:DANN}
The experiments using DANN follow a similar pattern.
The model consists of an embedding network, a classification network, and a ``domain discrimination'' network.
All three modules are two-layer multi-layer perceptrons (MLP).
The number of hidden units of all MLPs are sampled from the range specified in Table \ref{tab:hyper_param_baselines}, and we trained 100 models.
Both label classifier and domain discriminator are applied to the output of the embedding network.
The label classifier is trained to minimize the cross-entropy-loss between the predicted and the true label.
Similarly, the domain discriminator is trained to minimize the loss between predicted and true domain-label.
The embedding network is trained to minimize the regular task classification loss and at the same time to maximize the the domain-loss achieved by the domain discriminator.

\subsubsection{Invariant Risk Minimization}
\label{app:IRM_app}
For the experiments using IRM we used the authors' PyTorch implementation from \url{https://github.com/facebookresearch/InvariantRiskMinimization}.
We perform a random hyperparameter search over with the ranges shown in Table \ref{tab:hyperparams-irm}

\begin{table}[h!]
    \centering
    \caption{Hyperparameter ranges for IRM.}
    \begin{tabular}{lr}  
    \toprule
    Hyperparameter    & Ranges \\
    \midrule
    No. hidden units      &  $\{ 256, 512 \}$  \\
    No. hidden layers      &  $\{$3, 5$\}$  \\
    Batch-size      &  $\{64, 128, 256\}$  \\
    Optimizer      & $\{$Adam$_{\beta_1=0.9, \beta_2=0.999}$, SGD + momentum$_{0.9}\}$      \\
    Batch-normalization       & $\{$Yes, No$\}$       \\
   Penalty weight &  $\{10.0, 100.0, 1000.0\}$ \\
   Number of annealing iterations &  $\{0, 1, 2, 4, 8\}$ \\
   Learning rate & $\{$1e-3, 1e-2, 1e-1, 1$\}$ \\
    \bottomrule
    \end{tabular}    
    \label{tab:hyperparams-irm}
\end{table}

\subsubsection{Curves for all experiments}
In Figure \ref{fig:all-learning-curves} we show the learning curves of training and test accuracy for the different methods.

\begin{figure}[h!]
    \centering
    \includegraphics[width=0.8\textwidth]{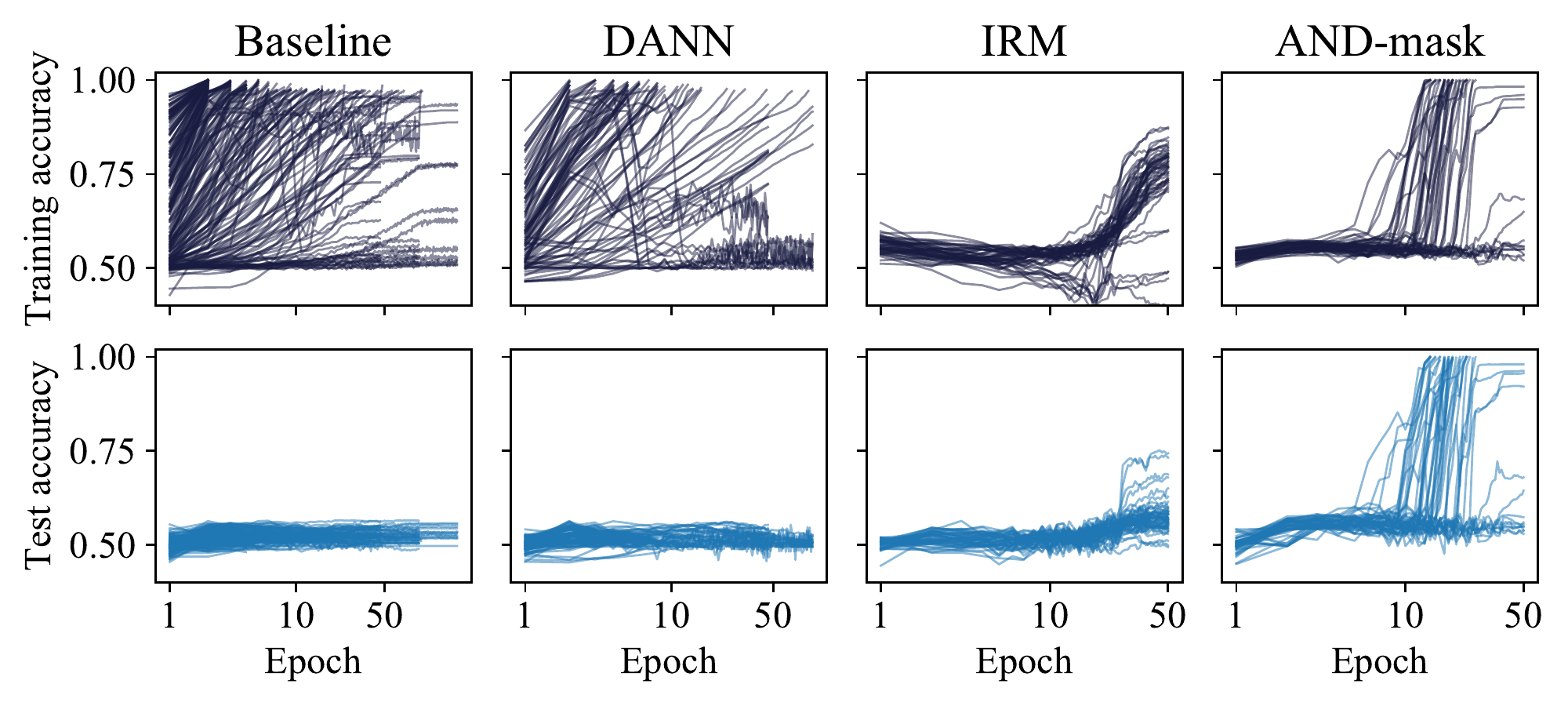}
    \caption{Learning curves for the evaluated methods. The top row shows the accuracy on the training set, the bottom row shows the accuracy on the test set.}
    \label{fig:all-learning-curves}
\end{figure}

\subsubsection{Correlation plots}
\label{app:correlation_plot}
For the correlation plots in Figure \ref{fig:norm_grad_rand} we used a randomly initialized MLP with the following configuration: 3 hidden layers, 256 hidden units. The dataset was using 16 environments and batches of size 1024.
The lines in Figure \ref{fig:norm_grad_rand} are linear least-squares regressions to the gradient data shown as scatter plots. 
We repeat the experiment 10 times with different network weight seeds, resulting in the 10 regression lines.
Zero gradients are excluded from the regression computation, as most gradients are masked out by the product mask in both cases.

\subsection{Further visualizations and experiments}

In Figure \ref{fig:varying_number_of_environments} we show how many environments need to be present for the baseline without AND-mask to switch the decision boundary from the shortcuts to the mechanism. Under the same experimental condition as in the main paper, the baseline first succeeds at 1024 environments. 

\begin{figure}[ht]
    \centering
    \includegraphics[width=1.\linewidth]{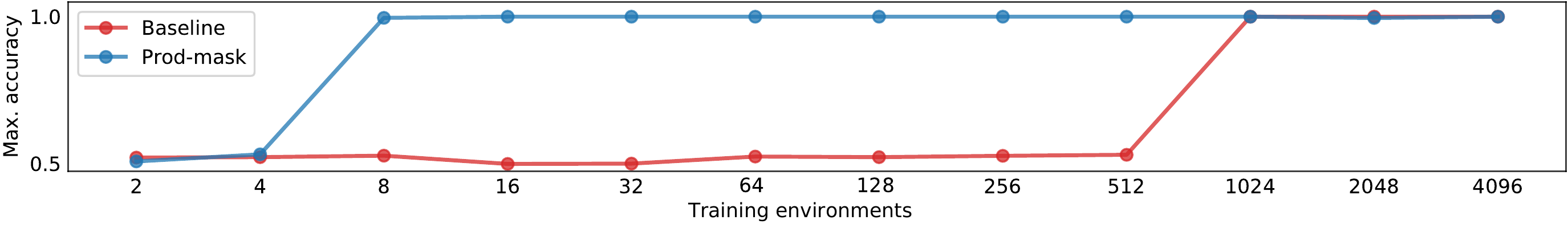}
    \caption{
    Relationship between number of training environments and test accuracy for the AND-mask method compared to the baseline.
    We show the best performance out of five runs using the settings that were used for the experiment in the main text. 
    }
    \label{fig:varying_number_of_environments}
\end{figure}

\subsection{Section \ref{sec:exp_cifar10}: CIFAR-10 memorization and label noise experiments}

\paragraph{Memorization experiment}
\begin{wrapfigure}{r}{0.37\textwidth}
    \centering
    \vspace{-5.2mm}
    \includegraphics[width=\linewidth]{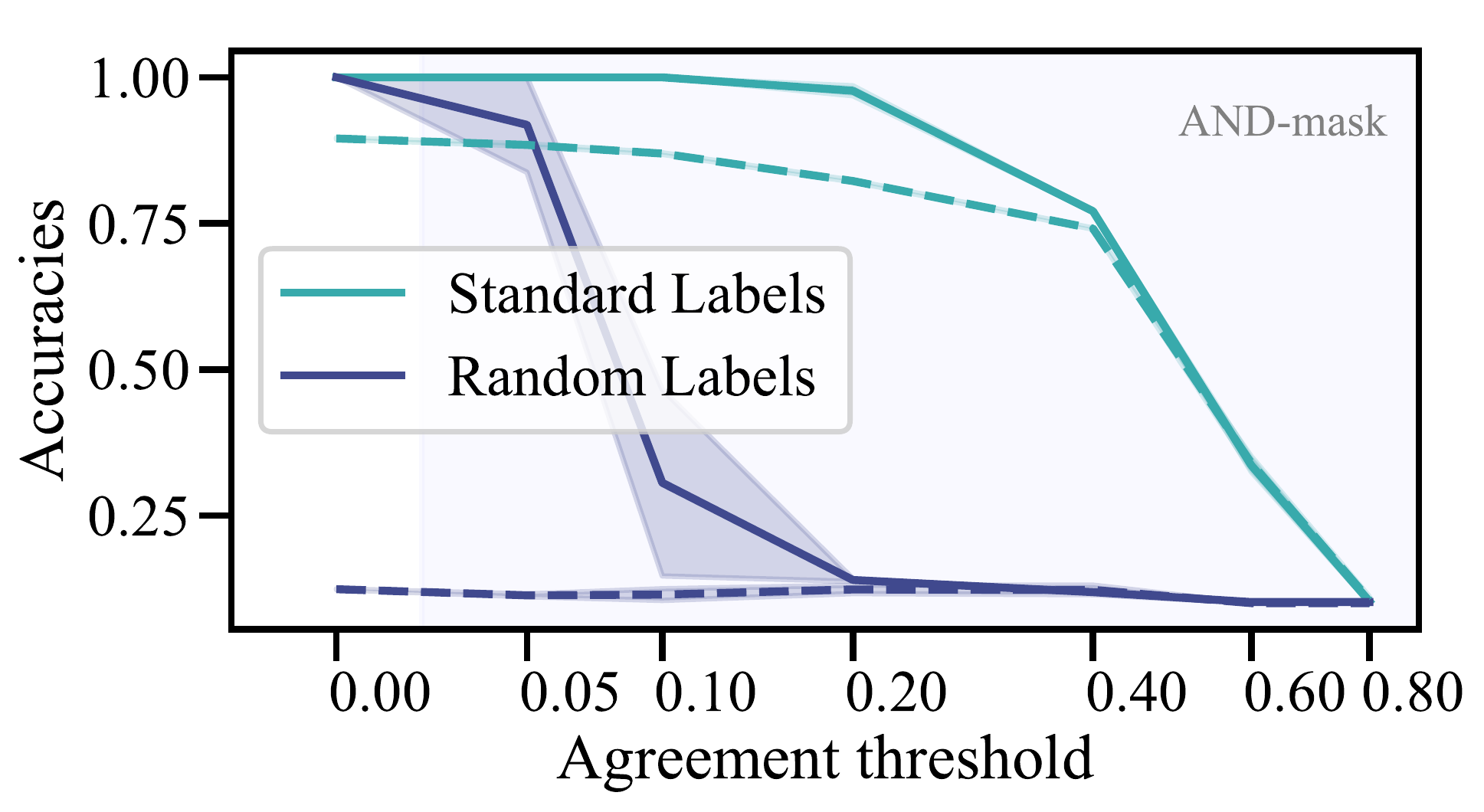}
    \caption{
    Dashed lines show test acc, solid lines show training acc.
    }
    \label{fig:cifar10_test}
    \vspace{-7mm}
\end{wrapfigure}
In Figure \ref{fig:cifar10_test}, we report the test performance (dashed lines) corresponding to the curves presented in the main paper for the CIFAR-10 memorization experiment.
The test performance with standard labels decreases slower than the training performance as the threshold increases, and they eventually reach the same value.
This is consistent with the hypothesis that by training on the consistent directions, the AND-mask selects the invariant patterns and prunes out the signals that are not invariant.

\paragraph{Network architecture and training details}
Each trial trains the ResNet ``\texttt{FastResNet}'' from the PyTorch-Ignite example\footnote{\url{https://github.com/pytorch/ignite/blob/master/examples/contrib/cifar10/fastresnet.py}} for 80 epochs on the full CIFAR-10 training set.
We use the Adam optimizer with a learning rate of \num{5e-4}, and a $0.1$ learning rate decay at epoch 40 and 60.
We fix the batch size to 80.
We set up 14 trials by evaluating each of the AND-mask-thresholds $\{0, 0.05, 0.1, 0.2, 0.4, 0.6, 0.8\}$ for two datasets: (a) unchanged CIFAR-10, (b) CIFAR-10 with the training labels replaced by random labels. Note that a threshold of 0 corresponds to not using the AND-mask.
Each trial is run twice with separate random seeds.

\paragraph{Label noise experiment}
We trained the same ResNet as for the experiment above, once with and once without the AND-mask. We ran each experiment with three different  starting learning rates $\{\num{5e-4}, \num{1e-3}, \num{5e-3}\}$ and a learning rate decay at epoch 60.
The baseline worked best with a learning rate of \num{1e-3}, while the AND-mask with \num{5e-3}, likely to compensate for the masked out gradients.
The AND-mask threshold that worked best was $0.2$, which is consistent with the results obtain in the experiment above.

\subsection{Section \ref{sec:coinrun}: Behavioral Cloning on CoinRun}
\label{app:coinrun}
The target policy $\pi^*$ is obtained by training PPO \citep{schulman2017proximal} for 400M time steps using the code\footnote{\url{https://github.com/openai/train-procgen}} for the paper \cite{cobbe2019leveraging}.
This policy is trained on the full distribution of levels in order to maximize its generality.
We use $\pi^*$ to generate a behavioral cloning (BC) dataset, consisting of pairs $(s, \pi^*(a|s))$, where $s$ are the input-images ($64 \times 64$ RGB) and $\pi^*(a|s)$ is the discrete probability distribution over actions output by $\pi^*$.

The states are sampled randomly from trajectories generated by $\pi^*$.
In order to test for generalization performance, the BC training dataset is restricted to $64$ distinct levels.
We generate $1000$ examples per training level.
The test set consists of $2000$ examples, each from a different level which does not appear in the training set.

\begin{figure}[ht]
    \centering
    \includegraphics[width=0.8\linewidth]{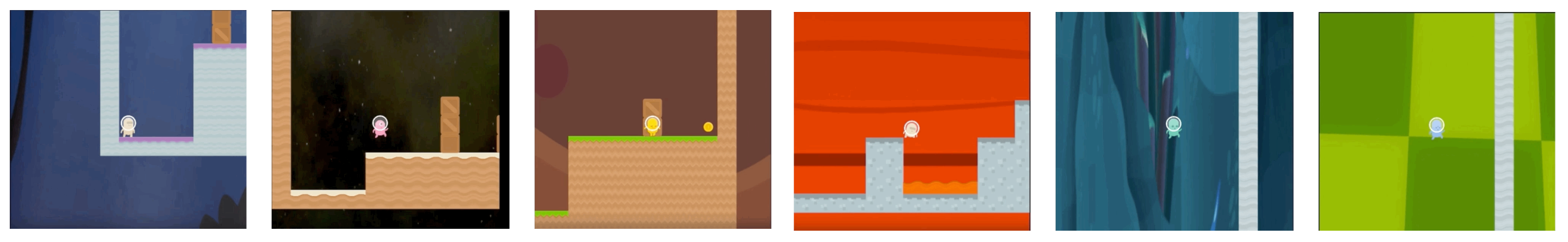}
    \caption{
    Screenshots of 6 levels of CoinRun (from OpenAI).
    }
    \label{fig:coinrun_openai}
\end{figure}
A ResNet-18 $\hat{\pi}_\theta$ is trained to minimize the loss $D_\mathrm{KL}(\pi^* || \hat{\pi}_\theta)$.
We ran two automatic hyperparameter optimization studies using Tree-structured Parzen Estimation (TPE) \citep{bergstra2013making}
of 1024 trials each, with and without the AND-mask.
The learning rate was decayed by a factor of 10 half-way at at $\nicefrac{3}{4}$ of the training epochs.

The ``temporal'' version of the AND-mask used for this experiment is reported in Algorithm \ref{alg:and-adam-temporal}.

\begin{algorithm}[h!]
\SetAlgoLined
\DontPrintSemicolon
$\mathbf{m} \gets \beta_1 \cdot \mathbf{m} + (1 - \beta_1) \cdot \mathbf{g}$ \;
$\mathbf{v} \gets \beta_2 \cdot \mathbf{v} + (1 - \beta_2) \cdot (\mathbf{g} \circ \mathbf{g}) $ \;
{\color{blue}
$\mathbf{a} \gets \beta_3 \cdot \mathbf{a} + (1 - \beta_3) \cdot  \texttt{elemwise\_sign}(\mathbf{g}) $ \;
$\mathbf{b} \gets \mathbbm{1}[\lvert \mathbf{a} \rvert \geq \tau]$ \;
}
$\mathbf{\theta} \gets \mathbf{\theta} - \alpha (\mathbf{m} \circ {\color{blue} \mathbf{b} }) \oslash \sqrt{\mathbf{v} + \epsilon} $ 
 \caption{Temporal AND-mask Adam}
 \label{alg:and-adam-temporal}
\end{algorithm}

In blue we highlight the additional lines compared to traditional Adam.
The threshold $\tau$ and $\beta_3$ are hyperparameters that we included in the 1'024 trials of the search using Tree-structured Parsen Estimators.
For the top 10 runs, hyperparameter values that were selected via the TPE search for the AND-mask are the following.

\addtolength{\tabcolsep}{0.45cm}    
\begin{table}[h!]
\centering
\caption{Hyperparameters for the 5 best runs using the AND-mask, from the TPE search.}
\begin{tabular}{@{}lccccc@{}}
\toprule
  {\color[HTML]{000000} Test KL div} &
  {\color[HTML]{000000} lr} &
  {\color[HTML]{000000} $\beta_1$} &
  {\color[HTML]{0000FF} $\beta_3$} &
  {\color[HTML]{0000FF} $\tau$} &
  {\color[HTML]{000000} weight decay} \\ \midrule
1.652e-2 & 0.0078 & 0.21 & 0.79 & 0.36 & 0.057 \\
1.656e-2 & 0.0072 & 0.26 & 0.86 & 0.40 & 0.041 \\
1.662e-2 & 0.0080 & 0.23 & 0.84 & 0.41 & 0.045 \\
1.665e-2 & 0.0068 & 0.33 & 0.72 & 0.47 & 0.077 \\
1.672e-2 & 0.0063 & 0.67 & 0.65 & 0.47 & 0.080 \\ \bottomrule
\end{tabular}
\end{table}
\addtolength{\tabcolsep}{-0.45cm}    

We found that applying weight decay as a second independent update \textit{after} the AND-mask routine improved performance.
To keep the comparison fair, we added this as a switch in the hyperparameter search for the Adam baseline as well, and it improved performance there as well.

\begin{figure}[htb]
     \centering
     \begin{subfigure}[b]{0.46\textwidth}
         \centering
         \includegraphics[width=\textwidth]{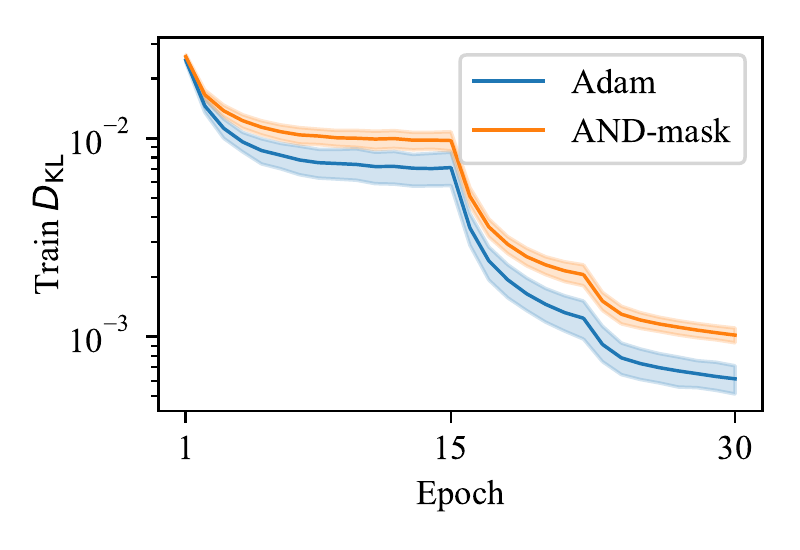}%
     \end{subfigure}
     \hfill
     \begin{subfigure}[b]{0.46\textwidth}
         \centering
         \includegraphics[width=\textwidth]{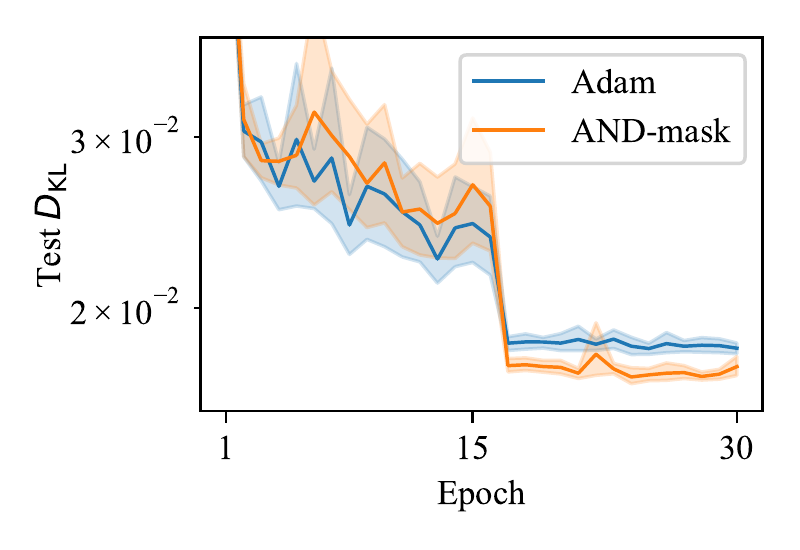}%
     \end{subfigure}
        \caption{Learning curves for the behavioral cloning experiment on CoinRun. 
        Training loss is shown on the left, test loss is shown on the right.
        We show the mean over the top-10 runs for each method. 
        The shaded regions correspond to the 95\% confidence interval of the mean based on bootstrapping.
        }
        \label{fig:coinrun-learning-curves}
\end{figure}

\section{Appendix to Section \ref{sec:rel_wrk}}
\subsection{Related work in causal inference}
\label{app:causality}

\paragraph{Causal graphs and causal factorizations}
The formalization of causality by directed acyclic graphs~\citep{Pearl2009} is a key element informing our exposition. According to such formalization, a causal model gives rise to each observed distribution.
It is thereby possible to exploit properties of the causal factorization of the joint probability distribution over the observed variables. Clearly, there are many ways to factorize a joint distribution into conditionals: DAG's help identifying the one that directly respects the underlying causal graph.
An important feature of the causal factorization is that many of the conditionals, which we can think of as physical mechanisms underlying the statistical dependencies represented, are expected to remain \emph{invariant} under interventions or changing external conditions. This postulate has appeared in various forms in the literature~\citep{Haavelmo43,Simon53,Hurwicz62,Pearl2009, Schoelkopf2012}.\footnote{This would be different for a non-causal factorization of the joint distribution, see~\cite{1911.10500}}
\paragraph{Causal models and robust regression}
Based on this insight, it was proposed that regression based on causal features should presents desirable invariance and robustness properties~\citep{mooij2009regression,Schoelkopf2012, peters2016causal,rojas2018invariant,heinze2018invariant, von2018semi, parascandolo2017learning}. 
In this view, the mechanisms can be considered as features of the patterns such that they support stable conditional probabilities.
Thus learning the mechanisms may help achieve a stable performance across a number of conditions. 
Other works connecting causality and learning through invariances are \citep{subbaswamy2018preventing,heinze2017conditional}, and perhaps -- most related to our work -- ~\citep{arjovsky2019invariant}: we presented a comparison with this method in the following section.

\paragraph{Causal regularization}
Recently \citep{janzing2019causal} showed that biasing learning towards models of lower complexity might in some cases be beneficial for a notion of generalization from observational to interventional regimes.
Our proposed solution is however different, in that we only indirectly deal with penalizing model complexity, and rather focus on our proposed notion of consistency. %

\subsection{Learning invariances in the data}

Here we are going to compare ILC to other approaches for learning invariances in the data with neural networks, and in particular to Invariant Risk Minimization (IRM) \cite{arjovsky2019invariant}.
The authors of IRM analyze a set up where minimizing training error might lead to models which absorb all the correlations found within the training data, thus failing to recover the relevant causal explanation.
They consider a multi-environment setting and focus on the objective of extracting data representations that lead to invariant prediction across environments.

While the high level objective is close to the one we focused on, the differences become clear when considering the definition of \emph{invariant predictors} presented in \cite{arjovsky2019invariant}:
\begin{definition}
A data representation $\Phi: \mathcal{X} \rightarrow \mathcal{H}$ elicits an invariant predictor $w \circ \Phi$ across environments $\mathcal{E}$ if there is a classifier $w: \mathcal{H} \rightarrow \mathcal{Y}$ simultaneously optimal for all environments, i.e., $w \in \arg \min _{\bar{w}: \mathcal{H} \rightarrow y} R^{e}(\bar{w} \circ \Phi)\ \forall e \in \mathcal{E}.$
\end{definition}

In particular, the objective minimized by IRM is:
\begin{equation}
    \min _{\Phi: \mathcal{X} \rightarrow \mathcal{Y}} \sum_{e \in \mathcal{E}_{\mathrm{tr}}} R^{e}(\Phi)+\lambda \cdot\left\|\nabla_{w | w=1.0} R^{e}(w \cdot \Phi)\right\|^{2}
\end{equation}
where $\Phi$ are the logits predicted by the neural network and $w$ is a dummy scaling variable (see \cite{arjovsky2019invariant}). The relevant part is the penalty term $\lambda \cdot\left\|\nabla_{w | w=1.0} R^{e}(w \cdot \Phi)\right\|^{2}$: One way to interpret it, is that the penalty is large on every environment where the distribution outputted by $\Phi$ could be made `closer' to the distribution of the labels by either sharpening ($w > 1$) or softening it (i.e., closer to uniform $w<1$).

Let us consider the example from IRM, where the authors describe two datasets of images that each contain either a cow or a camel: In one of the datasets, there is grass on 80\% of the images with cows, while in the other dataset there is grass on 90\% of them.
IRM then makes the point that we can learn to ignore grass as a feature, because its correlation with the label cow is inconsistent (80\% vs 90\%).
The setting we consider in this paper is slightly different: take our example from the CIFAR-10 experiments.
Under our concept of invariance, we expect that (depending on the data generating process) even a single dataset where we treat every image as coming from its own `environment' should be sufficient to discover invariances.
Drawing a connection to the setting from IRM, we would argue that the second dataset should not be necessary to learn that `grass' is not `cow'.
If one treats every example as coming from its own environment, there is already sufficient information in the first dataset to realize that cows are not grass: Grass is predictive of cows only in 80\% of the data, so grass cannot be `cow'.
The actual cow on the other hand, should be present in 100\% of the images, and as such it is the invariance we are looking for.
Note that this is of course a much more strict definition of invariance: If our dataset contains images labeled as 'cows' but that have no cows within them, we might start to discard the features of cows as well.

\end{document}